\documentclass[11pt]{article}
\usepackage{amsmath,amssymb,amsthm,mathtools}
\usepackage[ruled,vlined]{algorithm2e}
\usepackage[a4paper,margin=1in]{geometry}
\usepackage{graphicx}
\usepackage{booktabs}
\usepackage{hyperref}
 \usepackage{natbib} 
 
\newtheorem{theorem}{Theorem}
\newtheorem{lemma}{Lemma}
\newtheorem{definition}{Definition}

\newtheorem{corollary}{Corollary}
\newtheorem{remark}{Remark}
\newtheorem{assumption}{Assumption}

\newcommand{\E}{\mathbb{E}}

\newcommand{\ind}[1]{\mathbb{I}\!\left\{#1\right\}}

\newcommand{\OPT}[1]{\mathrm{OPT}_{#1}}
\newcommand{\dist}{\mathrm{dist}}

\usepackage{xcolor}
\usepackage{authblk}

\title{Coverage-Maximizing Multinomial Subset Routing under Operational Constraints}

\author[1]{Quan Zhou}
\author[2]{Yiyan Huang\thanks{Corresponding author: \texttt{huangyiyan@gbu.edu.cn}}}

\affil[1]{National University of Singapore}
\affil[2]{The Great Bay University}

\date{}

\begin{document}
\maketitle

\begin{abstract}
 {We introduce Multinomial Subset Routing (MSR), a new online routing framework over $K$ experts in which the learner keeps a multinomial routing policy instead of a deterministic subset of experts. At each round, the learner samples $M$ experts i.i.d. from the multinomial policy, and the resulting set of distinct sampled experts forms the routed subset.}

The reward depends only on the best-performing expert(s) in the routed subset. This reward structure arises naturally in routing across specialized models but is not captured by standard combinatorial bandits or subset-selection methods, which optimize deterministic subsets and typically assume additive rewards. We require the selection to satisfy several long-term, two-sided operational constraints under bandit feedback, observing only the winner's reward each round. We propose OMD-Approachability, combining online mirror descent with Blackwell's Approachability, and prove it achieves $O(1/\sqrt{T})$ regret in both reward and constraint violation. We ground the framework in practical application domains and validate it empirically on a real-world crowdsourcing dataset.
\end{abstract}

\section{Introduction}
\label{sec:intro}

The rise of specialized large language models (LLMs) has enabled hospitals to deploy ensembles of expert models—each trained on a distinct clinical domain—to assist in patient diagnosis and treatment planning. 
At each patient visit, the platform must select a subset of these specialists to consult. 
For any consults, only the most knowledgeable model determines the quality of the response, and the contributions of less capable models are subsumed. This winner-takes-all structure stands in sharp contrast to standard combinatorial bandits \cite{kveton2015tight}, where the quality is the sum of contributions of selected specialists. 
A patient's case typically spans multiple clinical dimensions (e.g., a cardiac complication, a kidney function issue, and a neurological symptom); for each dimension, only the most proficient selected specialist contributes, and the total reward aggregates these per-dimension maxima, yielding a sum-max structure \cite{pasteris2023sum}.
Formally, given a set of $K$ experts and $N$ medical dimensions, the reward of consulting a subset $A\subseteq[K]$ takes the form
$$r(A) = \sum_{k=1}^N \max_{i \in A} V_{k,i},$$
where $V_{k,i} \geq 0$ denotes the proficiency of specialist $i$ on the $k$-th medical dimension. This captures a fundamental property of ensemble consultation: specialists are complementary rather than additive.
Crucially, these specialist models are black-box: the platform observes only the aggregate reward $r(A)$ after consultation, not individual model contributions, motivating the bandit feedback setting.

Another key distinction between the proposed setting and standard combinatorial bandits is that the platform does not seek a deterministic subset. Beyond maximizing clinical coverage, the platform must also respect operational constraints: each medical specialty must be consulted sufficiently often to ensure broad coverage, while computational resources must be balanced across servers to ensure timely responses. Consequently, a fixed deterministic subset may fail to satisfy these operational requirements. Instead, the platform seeks a multinomial routing policy that generates different subsets over time, allowing aggregate routing decisions to achieve both high clinical coverage and operational feasibility.
We call this setting \textbf{Multinomial Subset Routing (MSR)}. In MSR, the learner maintains a routing distribution over the $K$ experts and, in each round, forms a subset by sampling $M$ experts independently from this distribution. The resulting subset consists of the distinct experts sampled in that round. Thus, the optimization object is a routing policy rather than a deterministic subset.

This setting is important for three reasons. First, a fixed MSR policy provides a simple and stable mechanism for repeatedly generating subsets over a long horizon. Second, repeated execution of the same policy enables long-term operational constraints, such as capacity and fairness, to be enforced through average routing frequencies. Third, for sum-max rewards, the expected reward induced by a MSR policy is concave over the probability simplex \cite{pasteris2023sum}, enabling the use of online convex optimization techniques.

Despite their practical importance, existing methods fail to address this problem in its full generality. On one hand, sum-max submodular bandits \citep{pasteris2023sum} capture the coverage structure of ensemble selection but do not incorporate operational constraints. On the other hand, bandit algorithms with knapsack constraints \citep{badanidiyuru2018bandits} typically handle budget requirements but assume simple linear or modular reward functions, which fail to model the coverage synergies inherent in multi-expert consultation. In the LLM routing literature, Mixture-of-Experts (MoE)  approaches \citep{shazeer2017outrageously} learn to gate over specialists but lack provable feasibility guarantees and do not account for long-term constraint satisfaction. 

Combining sum-max rewards with operational constraints poses two technical challenges. The sum-max reward structure differs from standard combinatorial bandits, and while its concave structure \cite{pasteris2023sum} enables online optimization of the reward, it simultaneously introduces a non-convexity in the operational constraints. We show that both challenges can be overcome by carefully exploiting this concave structure for the reward objective while introducing a surrogate constraint reformulation to restore tractability.
In this paper, we propose a novel framework for sum-max submodular bandits with two-sided operational (covering and packing) constraints. Our main contributions are:

\begin{itemize}
    \item We formulate the problem of online subset selection with sum-max rewards and multiple operational constraints under bandit feedback, a setting not addressed by prior work.
    
   \item  We propose an Online Mirror Descent algorithm with Approachability constraints (OMD-Approachability) that achieves sublinear regret in both reward and constraint violation, with provable feasibility guarantees.
   
   \item We extend the framework to the contextual setting, where the learner observes a context before acting.
   
   \item  We instantiate the framework on some real-world applications and validate empirically on a real dataset, demonstrating that OMD-Approachability achieves competitive reward while maintaining feasibility.
\end{itemize}

\section{Related Works}

\textbf{Mixture-of-Experts}
The Mixture-of-Experts (MoE) architecture, introduced by \cite{jacobs1991adaptive} and
scaled to modern LLMs by \cite{shazeer2017outrageously}, increases model capacity by
routing each input token to a learned subset of expert sub-networks via a gating
function, activating only a fraction of parameters per forward pass.
The Switch Transformer \cite{fedus2022switch} formalized the notion of \emph{expert
capacity} --- a hard upper bound on the number of tokens routable to each expert ---
and introduced load-balancing losses to prevent token dropping or expert collapse.
This capacity constraint is a direct motivation for our work: the problem of routing
queries to experts under capacity limits is naturally modeled as a bandit problem with
knapsack constraints.

\textbf{LLM Routing.}
While MoE routing operates at the token level within a single model, a related but
distinct line of work considers routing at the system level --- selecting which LLM among a pool of candidates to invoke for each incoming query, trading off response quality against inference cost.
Early approaches learn a routing function offline --- either by directly optimizing
performance minus scaled cost \citep{chen2022efficient, chen2020frugalml}, or by
training a supervised router with labels from all candidate models
\citep{shnitzer2023large, ong2024routellm, chen2024routerdc, feng2025graphrouter} ---
an assumption that breaks at deployment where only the outcome of the chosen model is
observed (bandit feedback).
Recent work addresses this by framing routing as an online learning problem:
\cite{nguyen2024metallm} formulates it as a multi-armed bandit that balances accuracy
and cost, while \cite{wei2025learning} uses a contextual bandit conditioned on a
user-specified performance--cost preference vector.

Our work differs along several dimensions: we route a \emph{subset} of experts
simultaneously rather than a \emph{single} model, we operate under more restricted feedback
(only the aggregated reward of the chosen subset is observed), and the selection is subject to multi-dimensional whole-horizon covering and packing constraints.
These constraints generalize the single cost-budget of prior work and can additionally encode operational limits (e.g., query caps on a specific GPU cluster) or fairness requirements (e.g., guaranteeing each hospital receives a minimum allocation of queries).
This routing-under-capacity problem above is closely related to the following Bandits with Knapsacks (BwK)
\citep{badanidiyuru2018bandits}.

\textbf{Bandits with Knapsacks.}
The  framework, pioneered by \cite{badanidiyuru2018bandits},
studies the problem of online decision-making under resource consumption constraints.
Over the years, it has been extended in numerous directions, including contextual settings
\citep{slivkins2023contextual, agrawal2016efficient}, linear bandits
\citep{agrawal2016linear}, combinatorial variants
\citep{liu2022combinatorial, sankararaman2018combinatorial}, and adversarial settings \citep{immorlica2022adversarial}, as well as connections to dueling bandits \citep{deb2024think}.
While the majority of approaches adopt UCB-style algorithms, recent work has explored
Thompson sampling as an alternative \citep{deb2025thompson}.
Most relevant to our work is the setting of concave rewards and convex knapsacks
\citep{agrawal2014bandits}, which proposes to handle the constraints via Blackwell's
Approachability algorithm; our work builds on and extends this line of research.

\textbf{Submodular Maximization with Constraints.}
This line of work dates back to the last century, studying the problem of selecting a
subset that maximizes a submodular set function subject to combinatorial constraints
\citep{bach2011learning}.
It is relevant to our setting because the sum-max reward function we consider is also submodular, satisfying the diminishing-returns property.
Classical approaches include greedy \citep{nemhauser1978analysis}, double greedy
\citep{buchbinder2015tight}, and local search \citep{lee2010submodular}, with a
celebrated breakthrough being the continuous greedy algorithm
\citep{calinescu2011maximizing}, which achieves the optimal $(1-1/e)$ approximation
for monotone submodular maximization under a matroid constraint, with extensions to non-monotone cases \citep{bian2017continuous}.
More recent works extend this framework to richer settings, including the intersection of matroid and knapsack constraints \citep{sarpatwar2019constrained,badanidiyuru2014fast}, changing environments \citep{xu2023online,matsuoka2021tracking}, online settings \citep{harvey2020improved,streeter2008online}
and resource-aware settings \citep{xu2025communication}.
However, These methods all require access to a value oracle of the set function, or with a access to subroutine to draw samples to estimate the function value e.g., \citep{goel2006asking}, that remain inapplicable to our setting, where the reward function is unknown and must be learned from observations on the fly.

\textbf{Submodular Bandits.}
This line of work relaxes the assumption of full access to a value oracle.
The earliest works \citep{yue2011linear, yu2016linear} model the reward as a weighted
sum of basis submodular functions with unknown weights, while still requiring access to
the value oracle of each basis function.
\citep{yu2016linear} additionally introduce a per-round knapsack constraint, meaning
the budget is enforced independently at each round; in contrast, our whole-horizon
knapsack couples decisions across rounds, requiring the algorithm to jointly manage
exploration and resource consumption over time.
\citep{takemori2020submodular} removes access to the value oracle entirely, requiring
only marginal gain observations $r(S \cup \{i\}) - r(S)$, and extends to per-round
knapsack and matroid constraints.
\citep{chen2017interactive} adopts the same observation model but additionally assumes
marginal gains are representable via a kernel function.

\textbf{K-max Bandits.}
A special case is when the reward is the maximum over selected arms, which is itself
submodular. Two distinct settings have been studied under the name \emph{K-max bandits}:
\emph{Extreme Bandits} \citep{cicirello2005max, carpentier2014extreme, achab2017max},
which maximizes the expected best single-sample reward across $T$ rounds, and
\citep{gopalan2013thompson, fourati2024combinatorial, chen2025continuous}, which defines
the per-round reward as the maximum outcome within a selected subset --- the setting directly related to ours, which we extend by incorporating knapsack constraints on action costs.
The consideration of action costs is not entirely new \citep{goel2006asking,pasteris2023sum} that both treat costs as a shared quantity across actions.
Our formulation is more general along two dimensions: first, we support
multiple cost types simultaneously, where each cost type aggregates individual action
costs over a specific group --- motivated by the notion of \emph{expert capacity} in
Mixture-of-Experts (MoE) models; second, we support both upper-bound and lower-bound
constraints on each cost type, requiring that certain resource consumption levels are
met as well as not exceeded.
We encode all of these as whole-horizon knapsack constraints.

\textbf{Combinatorial Bandits} consider a decision maker that selects $M$ base arms from $K$ in each round, forming a super-arm \citep{kveton2015tight}. In the standard setting, the reward of a super-arm is determined by the rewards of its selected base arms, often through an additive structure. Popular methods include CUCB \citep{chen2016combinatorial} and CTS \citep{wang2018thompson}. These methods typically select a deterministic super-arm, e.g., the top-$M$ base arms according to their UCB or Thompson-sampling values. In contrast, MSR optimizes a multinomial routing policy that induces subsets through repeated sampling.


\section{Motivating Examples}
\label{sec:applications}

We instantiate the general framework on three applications that illustrate the breadth of the coverage-maximizing bandit setting.

\paragraph{Medical Consultation via Mixture-of-Expert LLMs.}
A hospital network operates $K$ specialist LLMs, each trained on distinct datasets. At each patient visit, the platform selects a subset $A_t \subseteq [K]$ of up to $M$ specialists and receives a scalar reward $r_t(A_t) = \sum_{k=1}^N \max_{i \in A_t} V^{(t)}_{k,i}$, reflecting the clinical coverage of the consultation. Here $V^{(t)}_{k,i} \ge 0$ captures how well specialist $i$ covers clinical dimension $k$ for patient $t$. Patient feedback reflects the quality of the consultation as a whole, not the contribution of any individual specialist, so only the aggregate reward $r_t(A_t)$ is revealed after consultation. As examples of the operational constraints this can capture: (i) a \emph{specialty coverage constraint}, requiring each specialist to be consulted often enough that it keeps receiving cases to update on; and (ii) a \emph{GPU load-balancing constraint}, capping the average activations per server cluster.

\paragraph{Crowdsourcing Team Formation.}
A crowdsourcing platform assembles teams of up to $M$ workers from a pool of $K$ to complete sequentially arriving projects. The team quality reward for the $t$-th project is $r_t(A_t) = \sum_{k=1}^N \max_{i \in A_t} V^{(t)}_{k,i}$, where $V^{(t)}_{k,i}$ is the match quality between worker $i$ and project $t$ along skill dimension $k$ (e.g.\ software development, graphic design). The project requester only rates the completed project as a whole, not each worker's individual sub-contribution, so only the aggregate reward $r_t(A_t)$ is revealed after project completion. As examples of the operational constraints this can capture: (i) a \emph{wage budget constraint}, capping the average spend per project; and (ii) a \emph{group fairness constraint}, requiring each demographic group to be hired sufficiently often on average.

\paragraph{Online Ad Allocation.}
A platform selects a subset of up to $M$ candidate ads to display at each of $T$ rounds. The resulting engagement reward at the $t$-th round is $r_t(A_t) = \sum_{k=1}^N \max_{i \in A_t} V_{k,i}^{(t)}$, where $V_{k,i}^{(t)} \ge 0$ is the relevance of ad $i$ to user-interest category $k$. The realized payoff (e.g.\ total session revenue) is logged for the displayed slate as a whole; ad-serving logs do not reliably attribute a single conversion to one specific ad among several shown together, so only the aggregate reward $r_t(A_t)$ is revealed after each round. As examples of the operational constraints this can capture: (i) a \emph{budget constraint}, capping average spend per round; (ii) a \emph{fairness constraint}, requiring each advertiser group to receive a sufficient share of impressions on average; and (iii) a \emph{load-balancing constraint}, capping average activations per ad-serving node.

\section{Problem Setup and Notation}

We consider an online learning problem in which a learner repeatedly selects subsets of actions to maximize cumulative reward while satisfying long-term operational constraints.

Let $K, M, T \in \mathbb{N}$ be fixed, where $K$ denotes the number of available actions (specialists), $M$ is the maximum number of actions selected per round, and $T$ is the time horizon. We write $[K] := \{1,\dots,K\}$ and denote by $\Delta_K := \Bigl\{ q \in \mathbb{R}^K_+ : \textstyle\sum_{i=1}^K q_i = 1 \Bigr\}$ the probability simplex over $[K]$. For $u \in \mathbb{R}^d$ and a nonempty closed set $A \subseteq \mathbb{R}^d$, we
write $\mathrm{dist}(u, A) := \min_{x \in A} \|x - u\|$ for the Euclidean
distance from $u$ to $A$.

\paragraph{Rewards and costs.}
Each round $t$ is associated with a \emph{reward set function}
$r_t : 2^{[K]} \to [0,1]$ and $r_t(\emptyset) = 0$.
Each action $i \in [K]$ incurs a \textit{known} cost vector $c_i \in [0,\bar{c}]^d$, where $\bar{c}$ is the upper bound for each entry of the cost vector.

\paragraph{Interaction protocol.}
The interaction protocol below formalizes the MSR setting introduced in Section~\ref{sec:intro}.
Nature fixes in advance a sequence of reward functions $\{r_t\}_{t=1}^T$, which remain hidden from the learner. 
At each round $t = 1, \dots, T$, the learner proceeds as follows:
\begin{enumerate}
\item Choose a routing policy $q_t \in \Delta_K$.
  \item Draw $M$ i.i.d.\ actions $a_{t,1}, \dots, a_{t,M} \sim q_t$ and form the
        (random) subset $A_t := \{a_{t,j} : j \in [M]\} \subseteq [K]$,
        so $|A_t| \le M$.
  \item Observe the bandit feedback $r_t(A_t)$.
  \item Incur the (random) loss $\ell_t:=\displaystyle\sum_{i=1}^K c_i\;\mathbf{1}[i \in A_t]$.
\end{enumerate}

The learner must ensure the time-averaged loss vector lies within an axis-aligned box $S^{\mathrm{ori}} = \prod_{j=1}^d [l_j, u_j]$, i.e., $\tfrac{1}{T}\sum_{t=1}^T \ell_t \in S^{\mathrm{ori}}$.
We assume throughout that $0 \le u_j \le M\bar{c}$ and $0 \le l_j \le \bar{c}$.
This is without loss of generality: coordinates violating the bounds are
satisfied by every policy, since $c_i\in[0,\bar{c}]^d$ for $i\in[K]$ and $q_t \in \Delta_K$ for $t\in[T]$.

\subsection{Sum-Max Reward Functions}
\label{sec:sum-max}

Recall the informal sum-max reward $r(S)=\sum_{k=1}^N\max_{i\in S}V_{k,i}$ from Section~\ref{sec:intro}; following~\cite{pasteris2023sum}, we now fix this formally.

\begin{definition}[Sum-max function]
A set function $r : 2^{[K]} \to \mathbb{R}$ is \emph{sum-max} if there exist
$N \in \mathbb{N}$ and a matrix $V \in \mathbb{R}^{N \times K}$ such that
\[
  r(A) = \sum_{k=1}^N \max_{i \in A} V_{k,i}
  \qquad \forall\, A\subseteq [K],\; A\neq \emptyset,
  \qquad \text{and} \quad r(\emptyset) = 0.
\]
\end{definition}

\begin{definition}[Expected reward]
For any $q \in \Delta_K$, let $b_1(q), \dots, b_M(q)$ be i.i.d.\ draws from the routing policy $q$ and set $B(q) := \{b_j(q) : j \in [M]\}$. For a set function
$r : 2^{[K]} \to \mathbb{R}$, define
\[
  \Phi^r(q) := \mathbb{E}[r(B(q))], \qquad q \in \Delta_K.
\]
\end{definition}

\begin{lemma}[Concavity and gradient of $\Phi^r$]
\label{lem:phi-concave-grad}
Let $r$ be a sum-max function. Then $\Phi^r$ is concave over $\Delta_K$, and for
every $q\in\Delta_K$, the partial derivative of the expected reward with respect to the $i^{th}$ entry $q_i$, is
\[
\partial_i \Phi^r(q) = \mathbb{E}\left[\tfrac{r(B(q))}{q_i} \sum_{j \in [M]} \mathbf{1}\{b_j(q) = i\}\right].
\]
\end{lemma}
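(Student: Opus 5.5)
The proof splits into two independent parts: concavity, via a "layer-cake" decomposition of each max term, and the gradient formula, via an explicit polynomial form of $\Phi^r$.

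\textbf{Concavity.} Since $\Phi^r(q)=\sum_{k=1}^N \mathbb{E}[\max_{i\in B(q)} V_{k,i}]$ and a nonnegative sum of concave functions is concave, it suffices to treat a single row $k$. Because $M\ge 1$, the set $B(q)$ is never empty, so the convention $r(\emptyset)=0$ never enters. Fix $k$ and order the experts so that $v_{(1)}\ge v_{(2)}\ge\cdots\ge v_{(K)}$, where $v_{(j)}=V_{k,\pi(j)}$. Let $S_j=\{\pi(1),\dots,\pi(j)\}$ and set $v_{(K+1)}:=0$. Then for every nonempty $A$ I will use the identity
\[
\max_{i\in A}V_{k,i}=v_{(K)}+\sum_{j=1}^{K-1}\bigl(v_{(j)}-v_{(j+1)}\bigr)\mathbf{1}\{A\cap S_j\neq\emptyset\}.
\]
It holds because the sum telescopes down to the rank of the best element of $A$. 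Taking expectations gives
\[
\mathbb{E}\Bigl[\max_{i\in B(q)}V_{k,i}\Bigr]=v_{(K)}+\sum_{j=1}^{K-1}\bigl(v_{(j)}-v_{(j+1)}\bigr)\Bigl(1-\bigl(1-s_j(q)\bigr)^M\Bigr),
\]
where $s_j(q)=\sum_{i\in S_j}q_i\in[0,1]$ and the probability comes from the $M$ draws being i.i.d. The map $s\mapsto 1-(1-s)^M$ is concave on $[0,1]$, and $s_j$ is linear in $q$, so each bracket is concave on $\Delta_K$. The coefficients $v_{(j)}-v_{(j+1)}$ are nonnegative and $v_{(K)}$ is a constant (its sign is irrelevant), so each row term is concave. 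Summing over $k$ gives concavity of $\Phi^r$.

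\textbf{Gradient.} Expanding the expectation over the $K^M$ ordered draw sequences gives
\[
\Phi^r(q)=\sum_{b\in[K]^M} r\bigl(\{b_1,\dots,b_M\}\bigr)\prod_{j=1}^M q_{b_j}.
\]
This is a polynomial, and I take it as the natural extension of $\Phi^r$ to $\mathbb{R}^K_+$; the partial derivatives in the statement are understood with respect to this extension. For each monomial, $\partial_i\prod_j q_{b_j}=\bigl(\prod_j q_{b_j}\bigr)\cdot\frac{1}{q_i}\sum_j\mathbf{1}\{b_j=i\}$ whenever $q_i>0$. Summing over $b$ and re-reading the sum as an expectation under i.i.d. draws from $q$ yields the stated formula. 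If $q_i=0$, the factor $q_i$ in the product cancels the $1/q_i$, so the formula is read as the continuous extension, equivalently the polynomial derivative itself.

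\textbf{Main obstacle.} The delicate point is bookkeeping rather than depth. Concavity must be argued on $\Delta_K$ itself, where the tail probabilities $1-(1-s_j)^M$ are valid because $q$ is normalized. The derivative, by contrast, is taken on the unnormalized polynomial extension. I must state clearly that the two uses are consistent: the concave function and the polynomial agree on $\Delta_K$. I must also check that possibly negative entries of $V$ cause no issue, which is handled by isolating the constant $v_{(K)}$ in the decomposition.
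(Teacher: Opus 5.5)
Your proof is correct, but it takes a different route from the paper. The paper gives no argument of its own: it defers concavity to Theorem~2.3 and the gradient formula to Lemma~6.7 of \cite{pasteris2023sum}. You prove both facts directly.

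For concavity, your threshold decomposition of each row term reduces the claim to concavity of the scalar map $p\mapsto 1-(1-p)^M$ composed with the linear functions $s_j(q)$. This is exactly the map that makes $\mathbb{E}[\ell_t]$ in \eqref{eq:exact-cost} concave. It shows transparently why the reward side is tractable for online convex optimization, while the cost side needs the linear surrogate. Your argument also makes no sign assumption on $V$, which matches the paper's definition $V\in\mathbb{R}^{N\times K}$.

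For the gradient, your polynomial expansion over $[K]^M$ is the score-function identity. It makes explicit two points the citation glosses over. First, $\Delta_K$ has empty interior in $\mathbb{R}^K$, so the partials in the statement only make sense for a chosen extension, and the displayed formula is precisely the derivative of the polynomial one. Second, at a boundary point with $q_i=0$ the displayed expectation, read literally, is undefined (or $0$ under $0\cdot\infty=0$, since the indicators vanish almost surely). The true partial there is $M\,\mathbb{E}[r(\{b_1,\dots,b_{M-1}\}\cup\{i\})]$ with $b_1,\dots,b_{M-1}$ i.i.d.\ from $q$. So the formula must be read by continuity, exactly as you do. Neither subtlety affects the paper's use of the lemma, because the algorithm enforces $q_{t,i}\ge\gamma>0$.

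In short, the citation buys brevity. Your route is self-contained and slightly more general, and it pins down the precise sense in which the lemma holds on all of $\Delta_K$.
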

\begin{proof}
Both claims follow directly from \cite{pasteris2023sum}: concavity of $\Phi^r$ over $\Delta_K$ follows from Theorem~2.3, and the gradient expression from Lemma~6.7.
\end{proof}

\subsection{Tractability and constraint reformulation}
\label{sec:expected-cost}

Since the realized loss is stochastic, we analyze its expectation.
At round $t$, under routing policy $q_t \in \Delta_K$, arm $i$ appears in $A_t$ with probability $1-(1-q_{t,i})^M$, so the expected cost vector is
\begin{equation}
  \label{eq:exact-cost}
  \mathbb{E}[\ell_t]
  \;=\; \sum_{i=1}^K \bigl(1-(1-q_{t,i})^M\bigr)\, c_i.
\end{equation}
Recall that one of the learner's goals is to keep the time-averaged observed cost inside a target set $S^{\mathrm{ori}}$. 
Since the realized loss $\ell_t$ is stochastic, we might first work with the constraint in expectation: $\tfrac{1}{T}\sum_{t=1}^T\mathbb{E}[\ell_t] \in S^{\mathrm{ori}}$.
However, the map $q_t \mapsto \mathbb{E}[\ell_t]$ is \emph{concave} for $M \ge 1$,
which prevents the direct use of online convex optimization methods.
To restore tractability and apply the approachability algorithm in  \citep{Hazan}, we construct a convex and  {compact} set $S$ such that 
playing $q_t$ with $\sum_{i=1}^K q_{t,i}\,c_i \in S$ is sufficient to guarantee $\mathbb{E}[\ell_t] \in S^{\mathrm{ori}}$, i.e.\ $S$ is a safe inner approximation of $S^{\mathrm{ori}}$ under the linear surrogate.

Let $c_i^{(j)}$ denote the $j^{\mathrm{th}}$ entry of the cost vector $c_i$.
The construction proceeds in two steps: (i) we replace $\mathbb{E}[\ell_t]$ by
the linear surrogate $\sum_i q_i c_i$, which yields a strict inner
approximation; and (ii) we truncate $S$ to a box, which preserves both the
feasible set and the constraint distance and serves only to render $S$ compact,
as the approachability analysis of Section~\ref{sec:algorithm} requires.

\emph{(i) Surrogate substitution and rescaling.}
The map $q_t \mapsto \mathbb{E}[\ell_t]$ is concave, so constraining it directly
does not yield a convex feasible set. We therefore constrain the linear surrogate $\sum_{i=1}^K q_{t,i} c_i$ instead, which is safe
because the two are sandwiched coordinate-wise: for each $j \in [d]$,
\begin{equation}
  \label{eq:sandwich}
  \sum_{i=1}^K q_{t,i}\, c_i^{(j)}
  \;\le\;
  \mathbb{E}[\ell_t]^{(j)}
  \;\le\;
  M \sum_{i=1}^K q_{t,i}\, c_i^{(j)},
\end{equation}
by $p \le 1-(1-p)^M \le Mp$ for $p \in [0,1]$, $M \ge 1$, together with
$c_i^{(j)} \ge 0$. 

Consequently it suffices to enforce $\sum_{i=1}^K q_{t,i} c_i^{(j)} \le u_j/M$
for an upper-bound coordinate, and $\sum_{i=1}^K q_{t,i} c_i^{(j)} \ge l_j$ for a
lower-bound one\footnote{Note that since $q \mapsto \mathbb{E}[\ell_t]^{(j)} =
\sum_i (1-(1-q_i)^M) c_i^{(j)}$ is concave, the constraint
$\mathbb{E}[\ell_t]^{(j)} \ge l_j$ is convex in $q$ and can be enforced directly
without any surrogate. This yields a tighter benchmark $\mathrm{OPT}$, at the
cost of non-linear constraints in \eqref{equ:online-opt}; we adopt the linear
surrogate on both sides for uniformity of exposition.}.

 {
\emph{(ii) Truncating $S$ to a box.}
The half-lines produced by step~(i) are unbounded, so the approachability assumption defined in later text is violated. Since $\sum_i q_i c_i^{(j)} \in
[0,\bar{c}]$ for every $q \in \Delta_K$ and $j \in [d]$, we may intersect each
$S^{(j)}$ with $[0,\bar{c}]$ and take
\[
  S^{(j)} = [\,0,\ u_j/M\,]
  \quad\text{(upper-bound coordinates)},
  \qquad
  S^{(j)} = [\,l_j,\ \bar{c}\,]
  \quad\text{(lower-bound coordinates)},
\]
so that $S = \prod_j S^{(j)} \subseteq [0,\bar{c}]^d$. This alters neither the feasible set defined in step~(i) nor
the value of $\mathrm{dist}(\sum_i q_i c_i, S)$, but renders $S$ compact.
}


The following lemma is the guarantee of this construction. Recall that
$S^{\mathrm{ori}} = \prod_{j=1}^d (S^{\mathrm{ori}})^{(j)}$ is a product of
intervals, where each coordinate $j$ is either an \emph{upper-bound coordinate}
with $(S^{\mathrm{ori}})^{(j)} = (-\infty, u_j]$, or a \emph{lower-bound coordinate} with $(S^{\mathrm{ori}})^{(j)} =
[l_j, \infty)$; the corresponding coordinates of $S$ are $S^{(j)} = [0, u_j/M]$ and $S^{(j)} = [l_j, \bar{c}]$.

\begin{lemma}[Distance transfer]
\label{lem:dist-transfer}
For any sequence $q_1,\dots,q_T \in \Delta_K$, define
\[
  \phi \;:=\; \tfrac{1}{T}\sum_{t=1}^T \mathbb{E}[\ell_t]
  \;=\; \tfrac{1}{T}\sum_{t=1}^T \sum_{i=1}^K \bigl(1-(1-q_{t,i})^M\bigr)c_i,
  \qquad
  \varphi \;:=\; \tfrac{1}{T}\sum_{t=1}^T \sum_{i=1}^K q_{t,i}\, c_i.
\]
Then
\[
\operatorname{dist}\!\left(\phi,\; S^{\mathrm{ori}}\right)
  \;\le\;
  M \cdot \operatorname{dist}\!\left(\varphi,\; S\right).
\]
\end{lemma}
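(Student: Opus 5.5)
The proof will go coordinate by coordinate. Both $S^{\mathrm{ori}}$ and $S$ are products of intervals, so the Euclidean distance to each splits as
\[
\operatorname{dist}(x,S)^2=\sum_{j=1}^d \operatorname{dist}\bigl(x^{(j)},S^{(j)}\bigr)^2 ,
\]
and the same identity holds for $S^{\mathrm{ori}}$. It therefore suffices to prove the scalar inequality
\[
\operatorname{dist}\bigl(\phi^{(j)},(S^{\mathrm{ori}})^{(j)}\bigr)\le M\,\operatorname{dist}\bigl(\varphi^{(j)},S^{(j)}\bigr)
\]
for each $j\in[d]$. Squaring, summing over $j$ and taking square roots then gives the lemma. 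The input to each scalar inequality is the sandwich \eqref{eq:sandwich}, averaged over $t$, which gives $\varphi^{(j)}\le \phi^{(j)}\le M\varphi^{(j)}$ coordinate-wise. We also record that $\varphi^{(j)}\in[0,\bar c]$, because $q_t\in\Delta_K$ and $c_i^{(j)}\in[0,\bar c]$.

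\emph{Upper-bound coordinate.} Here $(S^{\mathrm{ori}})^{(j)}=(-\infty,u_j]$ and $S^{(j)}=[0,u_j/M]$. Since $\varphi^{(j)}\ge 0$, the distance of $\varphi^{(j)}$ to $S^{(j)}$ is $(\varphi^{(j)}-u_j/M)_+$. The distance of $\phi^{(j)}$ to $(S^{\mathrm{ori}})^{(j)}$ is $(\phi^{(j)}-u_j)_+$. Using $\phi^{(j)}\le M\varphi^{(j)}$,
\[
(\phi^{(j)}-u_j)_+\le (M\varphi^{(j)}-u_j)_+ = M(\varphi^{(j)}-u_j/M)_+ .
\]
This is exactly where the factor $M$ comes from.

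\emph{Lower-bound coordinate.} Here $(S^{\mathrm{ori}})^{(j)}=[l_j,\infty)$ and $S^{(j)}=[l_j,\bar c]$. Since $\varphi^{(j)}\le\bar c$, the distance of $\varphi^{(j)}$ to $S^{(j)}$ is $(l_j-\varphi^{(j)})_+$. Using $\phi^{(j)}\ge\varphi^{(j)}$,
\[
(l_j-\phi^{(j)})_+\le (l_j-\varphi^{(j)})_+\le M(l_j-\varphi^{(j)})_+ ,
\]
where the last step uses $M\ge1$. This case loses no factor at all.

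The main point needing care is the truncation. We must check that replacing the half-lines by the compact intervals $[0,u_j/M]$ and $[l_j,\bar c]$ does not increase the distance of $\varphi^{(j)}$. That check is precisely the fact $\varphi^{(j)}\in[0,\bar c]$ recorded above. Because of it, only the nontrivial endpoint ($u_j/M$ or $l_j$) can ever be the nearest point, so the distance formulas used in the two cases are correct. The assumptions $0\le u_j$ and $l_j\le\bar c$ guarantee these intervals are nonempty. After that, the proof is just the two monotone comparisons above combined across coordinates.
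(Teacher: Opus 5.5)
Your proposal is correct and follows the paper's proof essentially step for step: you reduce to coordinates over the two boxes and apply the averaged sandwich inequality, getting $(\phi^{(j)}-u_j)_+\le M(\varphi^{(j)}-u_j/M)_+$ on upper-bound coordinates and $(l_j-\phi^{(j)})_+\le(l_j-\varphi^{(j)})_+$ on lower-bound ones, with $\varphi^{(j)}\in[0,\bar c]$ ensuring the truncation leaves the distance unchanged. Your explicit justification of the reduction via $\operatorname{dist}(x,S)^2=\sum_j\operatorname{dist}(x^{(j)},S^{(j)})^2$, and the extra $M\ge 1$ step in the lower-bound case, only spell out what the paper leaves implicit.
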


\begin{proof}
Since $S^{\mathrm{ori}}$ and $S$ are both
axis-aligned boxes, it suffices to show, for each coordinate $j$
separately,
\[
  \operatorname{dist}\!\left(\phi^{(j)},\, (S^{\mathrm{ori}})^{(j)}\right)
  \;\le\;
  M \cdot \operatorname{dist}\!\left(\varphi^{(j)},\, S^{(j)}\right).
\]
We handle the two cases in turn.

\medskip
\emph{Case 1: Upper-bound coordinate}, $(S^{\mathrm{ori}})^{(j)} = (-\infty, u_j]$,
$S^{(j)} = [0, u_j/M]$.

The distance of a scalar $w \ge 0$ from $(-\infty, u]$ is $\max(w-u, 0)$, and
from $[0, u/M]$ it is likewise $\max(w - u/M, 0)$. From the sandwich
\eqref{eq:sandwich}, applied coordinate-wise and averaged over $t$,
\[
  0 \;\le\; \phi^{(j)} \;\le\; M \varphi^{(j)} ,
\]
and in particular $\varphi^{(j)} \ge 0$. Therefore
\begin{align*}
  \operatorname{dist}\bigl(\phi^{(j)}, (S^{\mathrm{ori}})^{(j)}\bigr)
    &= \max(\phi^{(j)} - u_j,\, 0) \\
    &\le \max(M\varphi^{(j)} - u_j,\, 0) \\
    &= M \cdot \max\bigl(\varphi^{(j)} - \tfrac{u_j}{M},\, 0\bigr) \\
    &= M \cdot \operatorname{dist}\bigl(\varphi^{(j)}, S^{(j)}\bigr).
\end{align*}

\medskip
\emph{Case 2: Lower-bound coordinate}, $(S^{\mathrm{ori}})^{(j)} = [l_j, \infty)$,
$S^{(j)} = [l_j, \bar{c}]$.

Since $\varphi^{(j)} \le \bar{c}$, the upper end of $S^{(j)}$ is inactive and the
distance of $\varphi^{(j)}$ from $S^{(j)}$ is $\max(l_j - \varphi^{(j)}, 0)$, as it is
from $[l_j, \infty)$. From the sandwich \eqref{eq:sandwich}, averaged over $t$,
$\phi^{(j)} \ge \varphi^{(j)} \ge 0$, whence
\[
  \operatorname{dist}\bigl(\phi^{(j)}, (S^{\mathrm{ori}})^{(j)}\bigr)
  = \max(l_j - \phi^{(j)},\, 0)
  \;\le\; \max(l_j - \varphi^{(j)},\, 0)
  = \operatorname{dist}\bigl(\varphi^{(j)}, S^{(j)}\bigr).
\]

\end{proof}

\begin{corollary}[Constraint transfer]
\label{cor:rescale}
If $\tfrac{1}{T}\sum_{t=1}^T\sum_{i=1}^K q_{t,i}\, c_i \in S$, then $\tfrac{1}{T}\sum_{t=1}^T \mathbb{E}[\ell_t] \in S^{\mathrm{ori}}$.
\end{corollary}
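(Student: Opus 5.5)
This follows immediately from Lemma~\ref{lem:dist-transfer}. Fix the sequence $q_1,\dots,q_T \in \Delta_K$ and form the two averages $\phi$ and $\varphi$ as in that lemma, so that $\varphi = \tfrac{1}{T}\sum_{t=1}^T\sum_{i=1}^K q_{t,i}\,c_i$ and $\phi = \tfrac{1}{T}\sum_{t=1}^T \mathbb{E}[\ell_t]$, the latter computed via \eqref{eq:exact-cost}. The hypothesis $\varphi \in S$ gives $\operatorname{dist}(\varphi, S) = 0$. Lemma~\ref{lem:dist-transfer} then yields
\[
  \operatorname{dist}\bigl(\phi, S^{\mathrm{ori}}\bigr) \le M \cdot 0 = 0 .
\]

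It remains to turn zero distance into membership. $S^{\mathrm{ori}}$ is a product of closed intervals, each of the form $(-\infty,u_j]$ or $[l_j,\infty)$, and is therefore closed. A point at distance zero from a nonempty closed set belongs to it, since the minimum in the definition of $\operatorname{dist}$ is attained at some $x$ with $\|x-\phi\|=0$. Hence $\phi \in S^{\mathrm{ori}}$, which is the claim.

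Nothing here is hard; the only point to check is closedness of $S^{\mathrm{ori}}$, which the paper's definition of $\operatorname{dist}$ via a minimum over closed sets already presupposes. If one prefers to avoid this step, the same conclusion can be read off coordinate by coordinate from the two cases in the proof of Lemma~\ref{lem:dist-transfer}. For an upper-bound coordinate, $\phi^{(j)} \le M\varphi^{(j)} \le u_j$. For a lower-bound coordinate, $\phi^{(j)} \ge \varphi^{(j)} \ge l_j$. Both inequalities use the sandwich \eqref{eq:sandwich} averaged over $t$.
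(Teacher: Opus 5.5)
Your proof is correct and follows the paper's route: the hypothesis gives $\operatorname{dist}(\varphi,S)=0$, Lemma~\ref{lem:dist-transfer} then forces $\operatorname{dist}(\phi,S^{\mathrm{ori}})=0$, and closedness of $S^{\mathrm{ori}}$ gives membership. Your conclusion $\phi\in S^{\mathrm{ori}}$ is the right one (the paper's one-line proof misprints it as $\phi\in S$), and both your closedness justification and the coordinate-wise alternative via \eqref{eq:sandwich} are valid.
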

\begin{proof}
  $\varphi \in S$ implies $\operatorname{dist}(\varphi, S) = 0$, so
  Lemma~\ref{lem:dist-transfer} gives $\operatorname{dist}(\phi, S^{\mathrm{ori}}) = 0$,
  i.e.\ $\phi \in S$.
\end{proof}

\subsection{Online Learning Goal}

The learner's goal is to simultaneously maximize the cumulative expected reward and keep the time-averaged observed cost inside a target set $S^{\mathrm{ori}} \subseteq \mathbb{R}^d$. 
Since the realized loss $\ell_t$ is stochastic, we may impose the constraint in expectation: $\tfrac{1}{T}\sum_{t=1}^T \mathbb{E}[\ell_t] \in S^{\mathrm{ori}}$.
The non-trivial interaction between the sum-max reward structure and the box constraints prevents a direct formulation.
We therefore work with the linear surrogate constraint
$\tfrac{1}{T}\sum_{t=1}^T\sum_i q_{t,i} c_i \in S$
in place of $\tfrac{1}{T}\sum_{t=1}^T\mathbb{E}[\ell_t] \in S^{\mathrm{ori}}$,
which by Corollary~\ref{cor:rescale} is sufficient to guarantee feasibility
with respect to $S^{\mathrm{ori}}$.
 {Further, we compare with the best fixed feasible routing policy in hindsight, not the best fixed subset. The benchmark is} 
\begin{equation}
  \mathrm{OPT} := \max_{q \in \Delta_K}\; \tfrac{1}{T}\sum_{t=1}^T \Phi^{r_t}(q)
  \quad \text{s.t.} \quad
  \sum_{i=1}^K q_i\, c_i \;\in\; S.    
\label{equ:opt-define}
\end{equation}
Since $S$ is a convex polytope and the objective is concave in $q$
(by Lemma~6.7 of~\cite{pasteris2023sum}), this is a concave program over a
convex feasible set.

We measure performance through two regret quantities.

\paragraph{Objective regret.}
The first compares the learner's cumulative expected reward against the
benchmark $\mathrm{OPT}$, i.e.\ against the best fixed feasible routing policy in
hindsight:
\begin{equation}\label{eq:reg1-reform}
  \mathcal{R}_1(T) := \mathrm{OPT} - \tfrac{1}{T}\sum_{t=1}^T \Phi^{r_t}(q_t).
\end{equation}

\paragraph{Constraint regret.}
The second measures how far the learner ends up from feasibility. Note that it is stated in terms of the \emph{realized} costs $\ell_t=\sum_i \mathbf{1}[i \in A_t] c_i$
and the \emph{original} target set $S^{\mathrm{ori}}$:
\begin{equation}\label{eq:reg2-reform}
\mathcal{R}_2(T) :=
  \operatorname{dist}\!\left(
    \tfrac{1}{T}\sum_{t=1}^T \ell_t,\; S^{\mathrm{ori}}
  \right).
\end{equation}

\section{Algorithm and Theoretical Guarantees}
\label{sec:algorithm}

We present Algorithm~\ref{alg:approachability-OMD}, where $h_S(w) = \max_{x\in S}\{w^\top x\}$ denotes the support function of $S$. 
 {
Since $S = \prod_{j=1}^d S^{(j)}$ is a compact axis-aligned box, the support
function $h_S(w) = \max_{x \in S}\{w^\top x\}$ is finite and decomposes
coordinate-wise:
\[
  h_{[0,\,u_j/M]}(w_j) =
  \begin{cases}
    (u_j/M)\, w_j & w_j \ge 0,\\[2pt]
    0 & w_j < 0,
  \end{cases}
  \qquad
  h_{[l_j,\,\bar{c}]}(w_j) =
  \begin{cases}
    \bar{c}\, w_j & w_j \ge 0,\\[2pt]
    l_j w_j & w_j < 0.
  \end{cases}
\]
}
By \cite{pasteris2023sum}, $g_{t,i}$ is an unbiased estimate of $\nabla \Phi^{r_t}(q_t)$, but may explode if $q_{t,i}$ is too small. The following assumption prevents this.

\begin{assumption}\label{ass:non-zero}
The benchmark problem \eqref{equ:opt-define} is feasible and admits an optimal solution $q\in\Delta_K$ satisfying $q_i\ge \gamma$ for all $i\in[K]$.
\end{assumption}

\begin{remark}[feasibility of the QP]\label{rem:feasible}
Assumption~\ref{ass:non-zero} implies that the quadratic program~\eqref{equ:online-opt} is feasible.

Let $q^\star$ be an optimal solution of the benchmark problem~\eqref{equ:opt-define}. By feasibility of~\eqref{equ:opt-define}, we have $\sum_{i=1}^K q_i^\star c_i \in S$.
Hence, by Lemma~\ref{lem:distance2support},
\[
w^\top \sum_{i=1}^K q_i^\star c_i - h_S(w) \le 0,
\qquad \forall\, \|w\|\le 1.
\]
Since the dual iterate $w_t$, as defined in the algorithm, lies in the unit Euclidean ball, the above inequality holds for $w=w_t$. Moreover, Assumption~\ref{ass:non-zero} ensures that $q_i^\star\ge\gamma$ for all $i$. Therefore $q^\star$ satisfies both the linear constraint and the lower-bound constraints in~\eqref{equ:online-opt}, so $q^\star$ is a feasible point of the quadratic program. Consequently, QP~\eqref{equ:online-opt} is feasible at every round.
\end{remark}

\begin{algorithm}[H]\label{alg:approachability-OMD}
\caption{Online Mirror Descent with Approachability Constraints}
\begin{enumerate}
\item \textbf{Input:} rescaled set $S$, OCO algorithm $\mathcal{A}$, cost vectors $c_i\in[0,\bar{c}]^d$ for $i\in[K]$, floor $\gamma$ and learning rate $\eta$.

\item Set $\mathbb{B} \subset \mathbb{R}^d$ to be the unit Euclidean ball, as decision set for $\mathcal{A}$.
\item Set $q_{0,i}:=1/K$ and initialize $g_{0,i}$ arbitrarily for $i\in[K]$.
\item For $t = 1, \ldots, T$:
\begin{enumerate}
    \item Set $f_{t-1}(w) := w^\top \left(\sum_{i=1}^K q_{t-1,i} c_i\right) - h_S(w)$.
    \item Query $\mathcal{A}$: $w_t \leftarrow \mathcal{A}(f_1, \ldots, f_{t-1})$
    \item Set $q_t$ as the solution of: 
    \begin{equation}
    \label{equ:online-opt}
    \begin{split}
        \max_{q\in\Delta_K:\; q\ge\gamma} & \quad q^\top g_{t-1} -\tfrac{\eta}{2}\|q-q_{t-1}\|^2 \\
        \text{s.t. }& w_t^\top  \sum_{i=1}^K q_i\; c_i - h_S(w_t) \leq 0
    \end{split}
    \end{equation}
\item Draw $a_{t,1},\dots,a_{t,M}$ independently from $q_t$\;
\item Set $A_t \leftarrow \{a_{t,j} : j \in [M]\}$\;
\item Observe $r_t(A_t)$\;
\item For each $i \in [K]$, define
\[
g_{t,i} \leftarrow \tfrac{r_t(A_t)}{q_{t,i}} \sum_{j=1}^M \ind{a_{t,j}=i}
\]
\end{enumerate}
\end{enumerate}
\end{algorithm}

\subsection{Theoretical Guarantees}
\label{sec:theory}

All proofs are deferred to Appendix.

\begin{theorem}[Reward regret]\label{thm:reg1}
Let $q_t$ for $t\in[T]$ be the output of Algorithm~\ref{alg:approachability-OMD} with proximal parameter $\eta^* = \sqrt{2TKM\bigl(\tfrac{1}{\gamma}+(M-1)\bigr)}$. Then
\[
T\;\mathbb{E}[\mathcal{R}_1(T)]\le 2\sqrt{2TKM\!\left(\tfrac{1}{\gamma} + (M-1)\right)}.
\]
\end{theorem}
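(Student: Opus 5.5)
We follow the standard analysis of online (projected) gradient ascent with bandit gradient estimates, run over a round-dependent feasible set. Let $q^\star$ be the optimal solution from Assumption~\ref{ass:non-zero}, so $q^\star_i\ge\gamma$. By Remark~\ref{rem:feasible}, $q^\star$ is feasible for the QP~\eqref{equ:online-opt} at every round. Each feasible set $\mathcal{K}_t:=\{q\in\Delta_K: q\ge\gamma,\ w_t^\top\sum_i q_ic_i-h_S(w_t)\le 0\}$ is convex, since it is the intersection of the simplex, a box and a half-space. Moreover $q_t$ is exactly the Euclidean projection of $q_{t-1}+g_{t-1}/\eta$ onto $\mathcal{K}_t$.

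\textbf{Step 1 (concavity).} By Lemma~\ref{lem:phi-concave-grad},
\[\Phi^{r_t}(q^\star)-\Phi^{r_t}(q_t)\le \nabla\Phi^{r_t}(q_t)^\top(q^\star-q_t).\]
Since $g_t$ is conditionally unbiased for $\nabla\Phi^{r_t}(q_t)$ given $q_t$, taking expectations turns the regret into $\mathbb{E}\sum_t g_t^\top(q^\star-q_t)$.

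\textbf{Step 2 (one-step inequality).} We use the first-order optimality condition of the QP, i.e. the three-point inequality for projections onto $\mathcal{K}_{t+1}$, applied at the point $q^\star\in\mathcal{K}_{t+1}$. This gives
\[g_t^\top(q^\star-q_{t+1})\le \tfrac{\eta}{2}\bigl(\|q^\star-q_t\|^2-\|q^\star-q_{t+1}\|^2-\|q_{t+1}-q_t\|^2\bigr).\]
We then write $g_t^\top(q^\star-q_t)=g_t^\top(q^\star-q_{t+1})+g_t^\top(q_{t+1}-q_t)$. Young's inequality bounds the second term by $\frac{1}{2\eta}\|g_t\|^2+\frac{\eta}{2}\|q_{t+1}-q_t\|^2$, and the $\frac{\eta}{2}\|q_{t+1}-q_t\|^2$ term cancels. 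The index shift is harmless: the final round $g_T^\top(q^\star-q_T)$ can be absorbed in the same way, and the initialization $q_0$ only contributes a diameter term. Summing and telescoping, with $\|q^\star-q_1\|^2\le 2$ on the simplex, gives
\[T\,\mathbb{E}[\mathcal{R}_1(T)]\le \eta+\tfrac{1}{2\eta}\sum_{t=1}^T\mathbb{E}\|g_t\|^2.\]
Notably, the changing constraint $w_t$ does not hurt, because the comparator lies in every $\mathcal{K}_t$.

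\textbf{Step 3 (variance bound; the main obstacle).} Let $n_i=\sum_j\mathbf{1}\{a_{t,j}=i\}\sim\mathrm{Bin}(M,q_{t,i})$ and use $r_t\le1$. Then
\[\mathbb{E}\|g_t\|^2\le\sum_i \frac{\mathbb{E}[n_i^2]}{q_{t,i}^2}=\sum_i\frac{Mq_{t,i}(1-q_{t,i})+M^2q_{t,i}^2}{q_{t,i}^2}\le \sum_i\Bigl(\frac{M}{q_{t,i}}+M(M-1)\Bigr).\]
The floor $q_{t,i}\ge\gamma$ enforced in the QP bounds this by $KM\bigl(\tfrac1\gamma+(M-1)\bigr)$. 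This is the step where the floor is essential; without it the estimator's second moment explodes.

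\textbf{Step 4 (tuning).} Plugging in gives $T\,\mathbb{E}[\mathcal{R}_1]\le \eta+\frac{TKM(1/\gamma+M-1)}{2\eta}$. With $\eta^*=\sqrt{2TKM(1/\gamma+M-1)}$ this is at most $2\sqrt{2TKM(1/\gamma+M-1)}$, up to the constant bookkeeping for the diameter term (either $D^2=2$ with a factor $\tfrac12$, or a slightly loose $\eta$ term). Any slack in the constants can be absorbed by using $\|q^\star-q_1\|^2\le 2$ and the coarse variance bound above.
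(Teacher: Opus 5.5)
Your proposal is correct and follows essentially the same route as the paper. Both arguments take the comparator $q^\star$ from Assumption~\ref{ass:non-zero}, which lies in every round's QP feasible set. Both then use first-order optimality with the three-point identity, let Young's inequality cancel the $\|q_{t+1}-q_t\|^2$ term, apply conditional unbiasedness of $g_t$, and bound each coordinate's second moment by $M/\gamma+M(M-1)$ using the floor.

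The only difference is in the constants. You use the sharp Young constant $\tfrac{1}{2\eta}$ where the paper uses the looser $\tfrac{2}{\eta}$. With $X=TKM(\tfrac1\gamma+M-1)$ this gives $\eta^*+\tfrac{X}{2\eta^*}=\tfrac54\sqrt{2X}\le 2\sqrt{2X}$. So the stated bound holds with room to spare, and your closing hedge about constant bookkeeping is unnecessary.
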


The constraint regret bound requires the following standard approachability condition.


\begin{theorem}[Constraint regret]\label{thm:reg2}
Let $\mathcal{A}$ be an OCO algorithm with regret $\mathrm{Regret}_T(\mathcal{A})$. Then for an absolute constant $L>0$,
\[
\mathbb{E}\left[\mathcal{R}_2(T)\right]\le M\cdot\tfrac{\mathrm{Regret}_T(\mathcal{A})}{T} +   {M}\bar{c}L\sqrt{\tfrac{d\log(2d)}{T}}.
\]
\end{theorem}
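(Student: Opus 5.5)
The plan is to split the realized constraint distance into a stochastic part and a deterministic approachability part. By the triangle inequality for $\dist(\cdot,S^{\mathrm{ori}})$, which is $1$-Lipschitz since $S^{\mathrm{ori}}$ is convex,
\[
\mathcal{R}_2(T)\le \dist(\phi,S^{\mathrm{ori}}) + \Bigl\|\tfrac1T\textstyle\sum_{t=1}^T(\ell_t-\mathbb{E}_t[\ell_t])\Bigr\|,
\]
where $\mathbb{E}_t[\ell_t]=\sum_i(1-(1-q_{t,i})^M)c_i$ is the conditional expectation given the past. We handle the first term with Lemma~\ref{lem:dist-transfer}, which gives $\dist(\phi,S^{\mathrm{ori}})\le M\,\dist(\varphi,S)$. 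This reduces the first term to the surrogate average $\varphi=\tfrac1T\sum_t\sum_i q_{t,i}c_i$.

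\emph{Approachability step.} We use the support-function characterization (the Lemma~\ref{lem:distance2support} already invoked in Remark~\ref{rem:feasible}):
\[
\dist(x,S)=\max_{\|w\|\le 1}\bigl(w^\top x-h_S(w)\bigr).
\]
Since $h_S$ is positively homogeneous and convex, $f(w;x)=w^\top x-h_S(w)$ is concave in $w$ and linear in $x$. Hence
\[
\dist(\varphi,S)=\max_{\|w\|\le1}\tfrac1T\sum_t f_t(w).
\]
The OCO algorithm $\mathcal{A}$ maximizes the concave payoffs $f_t$ over $\mathbb{B}$, so this is at most $\tfrac1T\sum_t f_t(w_t)+\mathrm{Regret}_T(\mathcal{A})/T$. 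The constraint in QP~\eqref{equ:online-opt}, which is feasible by Remark~\ref{rem:feasible}, forces $f_t(w_t)\le 0$ at each round.

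There is one index subtlety. The algorithm builds $f_{t-1}$ from $q_{t-1}$ and plays $w_t$ against it, whereas the QP constrains $q_t$ against $w_t$. I would align the indexing so that the loss fed to $\mathcal{A}$ at step $t$ is $f_t(w)=w^\top\sum_iq_{t,i}c_i-h_S(w)$, evaluated at the $w_t$ chosen before $q_t$. This is the standard Blackwell-via-OCO reduction of \cite{Hazan}, valid because $w_t$ depends only on $f_1,\dots,f_{t-1}$. Combining the pieces gives $M\,\dist(\varphi,S)\le M\,\mathrm{Regret}_T(\mathcal{A})/T$ almost surely.

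\emph{Concentration step.} The vectors $X_t=\ell_t-\mathbb{E}_t[\ell_t]$ form a martingale difference sequence in $\mathbb{R}^d$. Each coordinate satisfies $|X_t^{(j)}|\le M\bar c$, since at most $M$ distinct arms are selected and each cost is at most $\bar c$. Azuma--Hoeffding per coordinate gives sub-Gaussian tails with variance proxy $T(M\bar c)^2$. A union bound over $2d$ signs, or the standard maximal inequality for sub-Gaussian maxima, then yields
\[
\mathbb{E}\max_j\bigl|\tfrac1T\textstyle\sum_t X_t^{(j)}\bigr|\le M\bar c\sqrt{2\log(2d)/T}.
\]
Finally $\|\cdot\|_2\le\sqrt d\,\|\cdot\|_\infty$, which produces the $M\bar cL\sqrt{d\log(2d)/T}$ term with an absolute constant $L$. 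Taking expectations of the decomposition finishes the proof.

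The main obstacle is the approachability step, specifically the off-by-one alignment between $w_t$ and $q_t$, so that the QP feasibility genuinely yields $f_t(w_t)\le0$ for the function on which $\mathcal{A}$'s regret is measured. A secondary point is the direction of regret: the support-function argument must use regret for maximizing concave $f_t$, equivalently minimizing the convex $-f_t$, over the unit ball.
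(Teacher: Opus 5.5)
Your proposal is correct and follows essentially the same route as the paper. Like the paper, you split $\mathcal{R}_2(T)$ by the triangle inequality into a sampling term and an expected-cost term, use Lemma~\ref{lem:dist-transfer} to reduce the latter to $M\,\mathrm{dist}(\varphi,S)$, bound that by the support-function/OCO argument with $f_t(w_t)\le 0$ (Lemma~\ref{lem:approach-error}), and control the sampling term with per-coordinate Azuma--Hoeffding; your sub-Gaussian maximal inequality combined with $\|\cdot\|_2\le\sqrt d\,\|\cdot\|_\infty$ is a cleaner substitute for the paper's union-bound-then-integrate-the-tail computation, giving an explicit $L=\sqrt{2}$. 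The off-by-one you flag is not actually present: the algorithm queries $w_t$ from $f_1,\dots,f_{t-1}$ only and then chooses $q_t$ so that $f_t(w_t)\le 0$, which is exactly the alignment your argument (and the paper's) uses.
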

Using online gradient descent (OGD) as $\mathcal{A}$ gives $\mathrm{Regret}_T(\mathcal{A}) = O(\sqrt{T})$, so both regrets are $O(1/\sqrt{T})$ and thus sublinear.

\subsection{Contextual extension}
\label{sec:contextual}

We extend to the contextual setting, where at round $t$ the learner observes a context $x_t \in \mathcal{X}$ from a finite set $\mathcal{X}$ (e.g.\ the case type in Section~\ref{sec:applications}) before selecting its routing policy. No distributional assumption is placed on the context sequence. The learner runs one independent copy of Algorithm~\ref{alg:approachability-OMD} per context, sharing only the dual variable $w_t$ across contexts to enforce the aggregate constraint. The contextual benchmark optimises over the best fixed feasible policy per context,
\[
  \mathrm{OPT}^{\mathrm{ctx}} := \tfrac{1}{T}\sum_{x\in\mathcal{X}} \;\max_{\substack{q\in\Delta_K:\,q_i\ge\gamma \\ \sum_i q_i c_i \in S}} \;\sum_{t:\,x_t=x} \Phi^{r_t}(q),
\]
and the contextual regret is $\mathcal{R}_1^{\mathrm{ctx}}(T) := \mathrm{OPT}^{\mathrm{ctx}} - \tfrac{1}{T}\sum_{t=1}^T \Phi^{r_t}(q^{(x_t)}_t)$. The full contextual algorithm is given in Appendix~\ref{app:contextual}.

\begin{corollary}[Contextual reward regret]\label{cor:contextual-reg1}
Run the contextual algorithm (Appendix~\ref{app:contextual}) with per-context anytime proximal parameter $\eta_t = a\sqrt{n_{x_t}}$, where $a=2\sqrt{KM(\tfrac1\gamma+M-1)}$ and $n_{x_t}$ is the visit count for context $x_t$ up to round $t$. Then, without prior knowledge of $T$ or $\{T^{(x)}\}$,
\[
  T\,\mathbb{E}\!\left[\mathcal{R}_1^{\mathrm{ctx}}(T)\right] \;\le\; 4\sqrt{\lvert\mathcal{X}\rvert\,T K M\!\left(\tfrac{1}{\gamma}+(M-1)\right)}.
\]
The contextual regret is $O(\sqrt{|\mathcal{X}|})$ larger than the non-contextual bound---sublinear in the number of contexts.
\end{corollary}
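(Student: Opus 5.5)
The plan is to reduce the contextual guarantee to the non-contextual analysis behind Theorem~\ref{thm:reg1}, applied separately to each context. I would then aggregate the per-context bounds with a Cauchy--Schwarz step.

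\textbf{Step 1: per-context decomposition.} For each $x\in\mathcal{X}$ let $T^{(x)}:=|\{t:x_t=x\}|$ and let $q^{\star,x}$ be the maximiser in the definition of $\mathrm{OPT}^{\mathrm{ctx}}$. Then
\[
T\,\mathcal{R}_1^{\mathrm{ctx}}(T)=\sum_{x\in\mathcal{X}}\sum_{t:x_t=x}\bigl(\Phi^{r_t}(q^{\star,x})-\Phi^{r_t}(q^{(x)}_t)\bigr).
\]
The context sequence is fixed obliviously in advance, since nature fixes everything. Hence each inner sum only involves the copy of the algorithm for context $x$, run on its own subsequence of rounds. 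The shared dual variable $w_t$ enters that copy only through the constraint set of the QP~\eqref{equ:online-opt}. As in Remark~\ref{rem:feasible}, $q^{\star,x}$ satisfies $q^{\star,x}\ge\gamma$ and $w_t^\top\sum_i q^{\star,x}_i c_i-h_S(w_t)\le0$ for every $\|w_t\|\le1$. So $q^{\star,x}$ lies in every round's feasible set, which is convex, and this is all a comparator argument for projected OMD needs.

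\textbf{Step 2: per-context anytime OMD bound.} On the $n$-th visit to context $x$, I would use concavity (Lemma~\ref{lem:phi-concave-grad}) to write $\Phi(q^\star)-\Phi(q_t)\le\langle\nabla\Phi(q_t),q^\star-q_t\rangle$. The step is a proximal ascent over the varying convex sets, with step $1/\eta_t$ and $\eta_t=a\sqrt{n}$. The standard three-point inequality for projection onto a convex set containing $q^\star$ then gives the following bound with $D=\max\|q-q'\|^2\le2$:
\[
\sum_{n=1}^{T^{(x)}}\langle g_n,q^\star-q_n\rangle\le\frac{\eta_{T^{(x)}}D}{2}+\sum_n\frac{\|g_n\|^2}{2\eta_n},
\]
Telescoping works because $\eta_n$ is nondecreasing and the comparator is fixed. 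Taking expectations and using unbiasedness of $g$ replaces $g$ by the true gradient on the left. The second-moment bound from the proof of Theorem~\ref{thm:reg1} gives $\E\|g_n\|^2\le KM(1/\gamma+M-1)=a^2/4$; it comes from $r\le1$, $q\ge\gamma$, and the multinomial moments $\E[N_i^2]=Mq_i(1-q_i)+M^2q_i^2$. Then $\sum_n 1/(2a\sqrt n)\cdot a^2/4\le a\sqrt{T^{(x)}}/4$, and the first term is $a\sqrt{T^{(x)}}$. This yields a per-context bound of order $a\sqrt{T^{(x)}}\le 2\sqrt{T^{(x)}KM(1/\gamma+M-1)}\cdot c_0$ with a constant $c_0\le 2$.

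One subtlety needs checking here. In the algorithm, $g_{t-1}$ is used at round $t$, a one-step lag. Per context, this means gradients from the previous visit to context $x$ are used. Since the reward functions are fixed in advance, this lag can be handled by the same index-shift as in Theorem~\ref{thm:reg1}.

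\textbf{Step 3: aggregate.} Summing the per-context bounds gives $\sum_x C\sqrt{T^{(x)}}\le C\sqrt{|\mathcal{X}|\sum_x T^{(x)}}=C\sqrt{|\mathcal{X}|T}$ by Cauchy--Schwarz. With $C=4\sqrt{KM(1/\gamma+M-1)}$ this yields the stated bound, and no step requires knowledge of $T$ or $T^{(x)}$.

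\textbf{Main obstacle.} I expect the main obstacle to be Step 2, carried out honestly. The feasible set changes each round through $w_t$, and the step size changes too, so the textbook fixed-set, fixed-$\eta$ telescoping must be redone. The key facts are that the comparator lies in all sets and that $\eta_t$ is monotone. I would try first to bound $\|q_n-q^\star\|^2(\eta_n-\eta_{n-1})$ summed by $D\,\eta_{T^{(x)}}$, which is exactly where the constant $4$ (rather than $2$) arises.
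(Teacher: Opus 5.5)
Your proposal is correct and follows essentially the same route as the paper. Both use a per-context OMD analysis with the comparator $q^{\star,x}$ feasible in every round's QP. Both apply Abel summation over the nondecreasing $\eta_s=a\sqrt{s}$ to get $a\sqrt{T^{(x)}}$, use the second-moment bound $\E\|g\|^2\le G^2:=KM(\tfrac1\gamma+M-1)$ together with $\sum_s s^{-1/2}\le 2\sqrt{T^{(x)}}$, and finish with Cauchy--Schwarz over contexts.

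The one difference is that you use the sharp Young inequality $\|g\|^2/(2\eta)$, whereas the paper uses the looser $2\|g\|^2/\eta$. You therefore get a per-context constant $\tfrac52 G$ instead of the paper's $a+4G^2/a=4G$. This also means the paper's $4$ comes from both terms, not from the Abel step alone as your last paragraph suggests. Your smaller constant leaves slack to absorb the one-visit lag in $\eta$, which the paper glosses over.
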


\begin{remark}[Shared dual, decoupled primal]
Algorithm~2 maintains a separate primal iterate $q^{(x)}$ for each context $x\in\mathcal X$, but a single shared dual variable $w_t$. At round $t$, only the primal iterate corresponding to the observed context $x_t$ is updated, whereas $w_t$ is updated using the policy actually played, $q_t=q^{(x_t)}$. Consequently, the reward optimization decomposes into $|\mathcal X|$ independent per-context online mirror descent problems, while the dual process couples all contexts through the aggregate surrogate cost
\(
\frac1T\sum_{t=1}^T\sum_{i=1}^K q_{t,i}c_i
\).
Since the proof of Theorem~2 depends only on this aggregate surrogate cost sequence and the shared dual updates, it applies  to Algorithm~2. Therefore the contextual algorithm satisfies the same constraint guarantee as in Theorem~2.
\end{remark}

\section{Numerical Experiments}
\label{sec:experiments}

We complement the theoretical guarantees of Section~\ref{sec:theory} with a numerical study on the real-world TREC 2010 Crowdsourcing Track dataset.

\subsection{Data and Preprocessing}
\label{sec:exp-data}

\paragraph{Source and cleaning.}
We use the TREC 2010 Crowdsourcing Track data distributed with the SpectralMethodsMeetEM repository of \cite{zhang2014spectral}: \texttt{trec\_crowd.txt} records each worker's binary response to a task, and \texttt{trec\_truth.txt} records each task's gold label. Inner-joining the two on task ID (keeping only tasks with a recorded gold label) yields $677$ workers, $2{,}275$ tasks, and $12{,}863$ annotations. For each annotation we mark \emph{correct} $=\mathbf{1}\{\text{response}=\text{gold label}\}$, and define a worker's overall accuracy as the mean of \emph{correct} over their annotations. Workers with fewer than $5$ annotations are discarded, since their accuracy estimate would otherwise be too noisy, leaving a reliable pool of $270$ workers.

\paragraph{Per-seed worker pool.}
For each of the random seeds used throughout Section~\ref{sec:experiments}, $K=50$ workers are sampled uniformly without replacement from the reliable pool. The skill and cost statistics below are recomputed within that sampled pool of $50$ workers, so quantities such as the cost normalization and the expert indicator are relative to the $50$ workers actually competing in that seed's instance, not to the full reliable pool.

\paragraph{Base skill matrix.}
The $2{,}275$ tasks are partitioned into $N$ groups by round-robin index on the sorted task IDs (the $j$-th task in sorted order is assigned to group $j\bmod N$). The base skill matrix $V_{\mathrm{base}}\in[0,1]^{N\times 50}$ has $V_{\mathrm{base}}[n,i]$ equal to sampled worker $i$'s accuracy restricted to group $n$'s tasks; if worker $i$ has no annotations in group $n$, this entry is imputed with worker $i$'s overall accuracy. The value of $N$ is specified per experiment.

\paragraph{Cost matrix.}
Within the sampled pool of $50$, each worker $i$ is assigned two cost coordinates: a \emph{workload} cost, equal to $i$'s annotation count divided by the largest annotation count in the pool, taking values in $[0,1]$; and a binary \emph{expert} indicator, equal to $1$ for the $10$ highest-accuracy workers in the pool and $0$ otherwise. 

\paragraph{Per-round realization $V^{(t)}$.}
Since $V_{\mathrm{base}}$ is static, the per-round skill matrix $V^{(t)}$ that forms the reward is instead drawn fresh each round from one of two environments built on top of $V_{\mathrm{base}}$. In the \emph{stochastic} environment, $V^{(t)}[n,i]\sim\mathrm{Beta}(\kappa V_{\mathrm{base}}[n,i],\,\kappa(1-V_{\mathrm{base}}[n,i]))$ independently each round with concentration $\kappa=20$, modeling stationary per-round noise around each worker's true accuracy. In the \emph{regime-switching} environment, every $T/10$ rounds a fresh random $10\%$ of workers ($5$ workers) have their mean accuracy shrunk by a factor of $0.2$ for that block, modeling transient drops in annotator reliability (e.g.\ fatigue) that shift the optimal team composition over time before recovering at the next block.

\subsection{Regret Convergence under Varying Constraint Tightness}
\label{sec:exp-sweep}

We verify that both the objective regret $R_1$ and the constraint regret $R_2$ decay at the $O(n^{-1/2})$ rate predicted by the theory, across two reward-dimension settings ($N\in\{3,5\}$) and three levels of constraint tightness ($K=50$, $M=5$, $T=10{,}000$ rounds, $5$ seeds, stochastic environment with $\kappa=20$). OMD-Approachability (Algorithm~\ref{alg:approachability-OMD}) is run in the batched setting where the routing distribution $q$ is held fixed for $B=10$ rounds and updated once per batch via a proximal QP step \eqref{equ:online-opt}; the natural convergence clock is therefore the number of OMD update steps $n=t/B$, and all curves are plotted against $n$.

\paragraph{Regret metrics.}
At each batch endpoint we compute two metrics. The objective regret
\[
  R_1(n) = \tfrac{1}{t}\sum_{s=1}^{t}\Bigl(\mathbb{E}_{A\sim q_T}[r(A,V^{(s)})] - r_s\Bigr), \quad t = nB,
\]
where $r_s=r_s(A_s)$ is the \emph{realized} reward at round $s$.
The quantity measures how much better the \emph{final} learned policy $q_T$ would have done if deployed from the start, relative to the algorithm's actual trajectory. We use $q_T$ as a proxy for the optimal feasible routing policy $q^*$ that attains $\mathrm{OPT}$ \eqref{equ:opt-define}, which is not available in closed form; after $T=10{,}000$ rounds of learning, $q_T$ is taken as its empirical surrogate. The expectation $\mathbb{E}_{A\sim q_T}[r(A,V^{(s)})]$ is estimated via $200$ Monte Carlo draws of $A\sim q_T$ applied retrospectively to each $V^{(s)}$. As $t$ grows, the objective regret approaches zero, and the theoretical bound gives a rate of $O(n^{-1/2})$. The constraint regret
\[
  R_2(n) = \mathrm{dist}\!\Bigl(\tfrac{1}{t}\textstyle\sum_{s=1}^{t} \ell_s,\;S^{\mathrm{ori}}\Bigr)
\]
is the Euclidean distance from the running-average realized cost vector to the original feasible box $S^{\mathrm{ori}}$; it is positive only when the time-averaged cost exceeds at least one constraint bound, and decays to zero once the algorithm stays consistently feasible.

\paragraph{Constraint configurations.}
Table~\ref{tab:sweep-configs} lists the four configurations, varying the expert-cap bound $\alpha_{\mathrm{expert}}$ (maximum average number of top-accuracy workers selected per round) and the budget margin added to the baseline budget $M\cdot\overline{C_0}$.

\begin{table}[h]
\centering
\caption{Constraint configurations used in Sections~\ref{sec:exp-sweep} and~\ref{sec:exp-crowd}. The ``none'' level (unconstrained) is used in Section~\ref{sec:exp-crowd} only.}
\label{tab:sweep-configs}
\begin{tabular}{lcc}
\toprule
Config & Budget margin & Expert cap $\alpha_{\mathrm{expert}}$ \\
\midrule
tight\ \ \ (exp$\le 2$)  & $+0.0$ & $2.0$    \\
medium (exp$\le 2.5$)    & $+0.1$ & $2.5$    \\
loose\ \ \ (exp$\le 5$)  & $+0.5$ & $5.0$    \\
none\ \ \ \ (unconstr.)  & $-$    & $\infty$ \\
\bottomrule
\end{tabular}
\end{table}

\paragraph{Hyperparameters.}
The OMD proximal parameter is $\eta = \eta_{\mathrm{mult}}\sqrt{2TKM(1/\gamma+M-1)}$, with $\eta_{\mathrm{mult}}=0.03$ for both $N=3$ and $N=5$. The dual OGD step size is $\eta_{\mathrm{ogd}}=0.05$, and the exploration floor is $\gamma=1/(10K)=0.002$.

\paragraph{Figure and results.}
Figure~\ref{fig:sweep} shows $R_1(n)$ (top row) and $R_2(n)$ (bottom row) for $N=3$ (left) and $N=5$ (right). On each panel a dashed $c/\sqrt{n}$ reference line is shown, reflecting the $O(n^{-1/2})$ theoretical prediction.

For $R_1$, all three constraint configurations decay to near-zero by $n=10{,}000$ under both $N$ values. The empirical curves lie above the $O(n^{-1/2})$ reference throughout, indicating that convergence is initially slower than the reference rate; nevertheless all curves reach near-zero by the end of the horizon. A short warm-up plateau is visible at $n\lesssim 30$, after which the curves enter a clear monotone descent. Under $N=5$, the loose configuration shows a more pronounced early hump, reflecting the wider reward range of the higher-dimensional sum-max objective. For $R_2$, only the tight configuration registers meaningful constraint violation: the running-average cost vector starts above the expert-cap boundary, and the dual OGD corrections gradually steer selections away from over-used expert workers. The tight-configuration $R_2$ tracks the $O(n^{-1/2})$ reference closely, reaching approximately zero by $n\approx 300$ for $N=3$ and $n\approx 1{,}000$ for $N=5$; the larger and slower-decaying violation under $N=5$ reflects the wider per-round cost variance when more skill groups are active. The medium configuration shows only a small initial transient that vanishes by $n\approx 20$--$30$. The loose configuration remains at zero throughout.

\begin{figure}[h]
\centering
\includegraphics[width=0.9\linewidth]{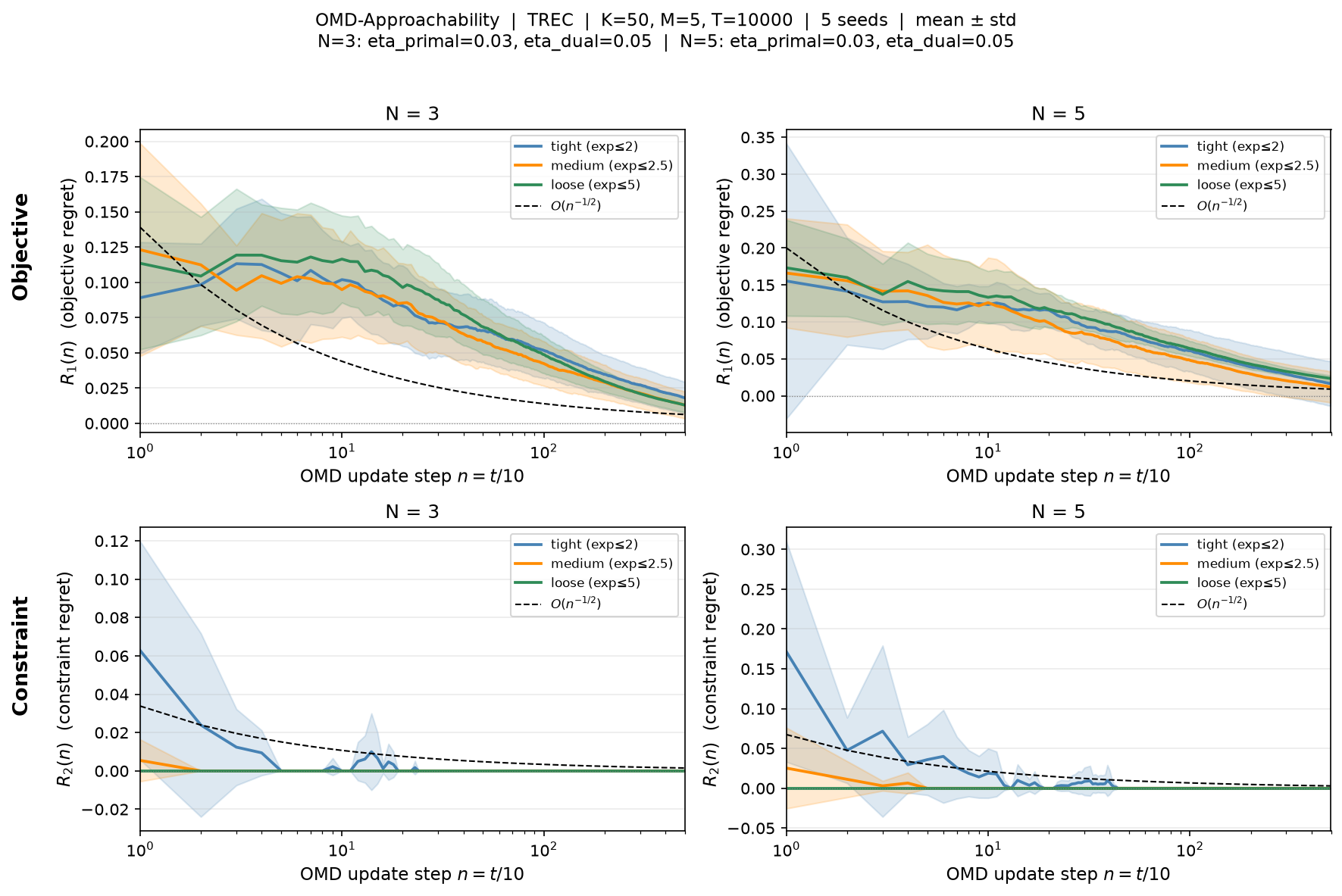}
\caption{Objective regret $R_1(n)$ (top) and constraint regret $R_2(n)$ (bottom) vs.\ OMD update step $n=t/10$, for $N=3$ (left) and $N=5$ (right). Mean $\pm$ one standard deviation across $5$ seeds. Dashed line: $O(n^{-1/2})$ reference curve.}
\label{fig:sweep}
\end{figure}

\subsection{Crowdsourced Annotation via Worker Reliability}
\label{sec:exp-crowd}

We instantiate the crowdsourcing application of Section~\ref{sec:applications} on the worker pool and skill/cost matrices of Section~\ref{sec:exp-data}, comparing eight policies across four constraint-tightness levels and two environment types. A team of $M=5$ workers is selected per round over $T=10{,}000$ rounds, with team-quality reward $r_t(A_t) = \sum_{n=1}^{3}\max_{i\in A_t} V^{(t)}_{n,i}$. Results are averaged over $20$ independent seeds.

\paragraph{Constraint configurations.}
The same two knapsack constraints as Section~\ref{sec:exp-data} are used: a budget constraint bounding time-averaged workload, and an \emph{expert-cap} constraint limiting the average number of top-accuracy workers selected per round. Four tightness levels are evaluated, as listed in Table~\ref{tab:sweep-configs}.

\paragraph{Policies.}
All UCB- and TS-based baselines begin with a round-robin warm-up of $10$ rounds that cycles through all $K=50$ workers before switching to their learned selection rule. Because the sum-max reward is non-decomposable, these baselines attribute the full combinatorial reward $r_t$ to each selected arm as a heuristic proxy for its individual contribution.

\begin{itemize}
  \item \textbf{Random:} selects $M$ workers uniformly at random without replacement each round. No learning, no constraint awareness; lower-bound reference.

  \item \textbf{Greedy:} after warm-up, always selects the $M$ workers with the highest empirical mean (running average of the shared reward $r_t$). Pure exploitation; no exploration or constraint awareness.

  \item \textbf{CUCB (Combinatorial UCB):} after warm-up, selects the top-$M$ arms by UCB index $\hat\mu_i + c\sqrt{2\log t / n_i}$, where $\hat\mu_i$ is arm $i$'s empirical mean reward, $n_i$ is the number of rounds arm $i$ has been selected so far, and $c=2$. Since only the aggregate team reward $r_t(A_t)$ is observed, $\hat\mu_i$ is updated via the shared-reward heuristic described above: $r_t$ is attributed to every selected arm as a proxy for its individual contribution. No constraint awareness; serves as an unconstrained reward ceiling.

  \item \textbf{CBwK (Combinatorial Bandits with Knapsacks):} extends CUCB with a Lagrangian dual $\boldsymbol\lambda\in\mathbb{R}^d_+$, initialized at $\boldsymbol\lambda=\mathbf{0}$. The selection score for arm $i$ is $\mathrm{UCB}_i+(\boldsymbol\lambda\odot\mathrm{sign})^\top C_{:,i}$, steering away from constraint-violating arms as $\boldsymbol\lambda$ grows. Dual updated each round by subgradient ascent on the running-average constraint violation, step size $10/\sqrt{T}$.

  \item \textbf{TS (Thompson Sampling):} maintains a $\mathrm{Beta}(\alpha_i,\beta_i)$ posterior per arm, initialized at $\alpha_i=\beta_i=1$. Each round samples $s_i\sim\mathrm{Beta}(\alpha_i,\beta_i)$ and selects the top-$M$ by $s_i$. Posterior update: $\alpha_i\mathrel{+}=r_t/N$, $\beta_i\mathrel{+}=1-r_t/N$ for each selected arm (normalized by $N$ to keep increments in $[0,1]$). No constraint awareness; Bayesian counterpart to CUCB.

  \item \textbf{TS-BwK (Thompson Sampling with Knapsacks):} extends TS with the same Lagrangian dual as CBwK. Selection score is $s_i+(\boldsymbol\lambda\odot\mathrm{sign})^\top C_{:,i}$; same subgradient dual update (step size $10/\sqrt{T}$). Bayesian counterpart to CBwK.

  \item \textbf{MoE (constrained online MoE gating):} an adaptation of sparse Mixture-of-Experts gating~\citep{shazeer2017outrageously} to the online, bandit-feedback, constrained setting. Each worker plays the role of an expert; the gating network is a logit vector $\boldsymbol\theta\in\mathbb{R}^K$ that learns which experts to activate. Rather than training on a supervised loss with per-expert gradients, the gate is updated online via REINFORCE on the aggregate team reward --- the only signal available under winner-only bandit feedback. Selection uses Gumbel-Top-$M$ (top-$M$ indices of $\boldsymbol\theta+\boldsymbol\xi$, $\xi_i\overset{\mathrm{i.i.d.}}{\sim}\mathrm{Gumbel}(0,1)$), the standard stochastic routing mechanism of sparse MoE, which interpolates between uniform exploration and greedy exploitation as $\boldsymbol\theta$ concentrates. Knapsack constraints are enforced via a Lagrangian dual $\boldsymbol\lambda$, giving the update
  \[
    \boldsymbol\theta \leftarrow \boldsymbol\theta + \eta_{\mathrm{lr}}\,r_{\mathrm{eff}}\,(\mathbf{1}_{A_t}-\mathbf{p}),
    \quad r_{\mathrm{eff}}=r_t+(\boldsymbol\lambda\odot\mathrm{sign})^\top x_t,
    \quad \mathbf{p}=\mathrm{softmax}(\boldsymbol\theta),\quad\eta_{\mathrm{lr}}=0.01,
  \]
  with $\boldsymbol\lambda$ initialized at $\mathbf{0}$ and updated by subgradient ascent at step size $4\times10^{-4}$.

  \item \textbf{OMD-Approachability (proposed):} Algorithm~\ref{alg:approachability-OMD}. Maintains a distribution $q\in\Delta_K$ updated every $B=10$ rounds via the constrained QP~\eqref{equ:online-opt}, using an importance-weighted bandit gradient with a running-average reward baseline. Arms are drawn $M$ times with replacement from $q$; no per-arm decomposition is required. The dual $w_t$ is updated each round via OGD on the unit ball. Primal proximal parameter $\eta=\eta_{\mathrm{mult}}\sqrt{2TKM(1/\gamma+M-1)}$ with $\eta_{\mathrm{mult}}=0.05$, dual step size $\eta_{\mathrm{ogd}}=0.05$, exploration floor $\gamma=1/(10K)=0.002$.
\end{itemize}

\paragraph{Figures and results.}
Figure~\ref{fig:crowd-sweep} reports the time-averaged reward and percentage of feasible runs across four constraint levels in both stochastic and regime-switching environments. Figure~\ref{fig:crowd-reward} shows the per-round rolling-average reward (window size $500$) under the tight and loose constraint settings for six representative policies. MoE and Random are omitted from Figure~\ref{fig:crowd-reward} because their substantially higher per-round variance dominates the plotting scale and obscures the comparison among the remaining methods.

\textbf{Reward--feasibility trade-off.}
The main result is that OMD-Approachability is the only policy that simultaneously achieves near-optimal reward and consistently satisfies the expert-cap constraint. Under the \emph{tight constraint}, OMD-Approachability attains $100\%$ feasibility in both stochastic and regime-switching environments (Figure~\ref{fig:crowd-sweep}). In the stochastic environment, TS-BwK is also highly feasible (about $95$--$100\%$), whereas CBwK achieves only about $40\%$ feasibility. In the regime-switching environment, TS-BwK drops to about $85\%$, while CBwK reaches about $65\%$, indicating that the approachability-based controller is substantially more robust to non-stationarity than the Lagrangian dual baselines. Constraint-unaware policies (CUCB, TS, and Greedy) are rarely feasible under the tight constraint because they have no mechanism to regulate cumulative expert usage.

\textbf{Effect of constraint relaxation.}
As the constraint is relaxed from tight to medium, loose, and unconstrained, feasibility increases across all methods and reaches essentially $100\%$ under the loose and unconstrained settings (Figure~\ref{fig:crowd-sweep}). At the same time, the average rewards of OMD-Approachability, TS, Greedy, TS-BwK, and CUCB remain close to the maximum value of $3.0$, while CBwK is consistently lower. Thus, OMD-Approachability preserves near-optimal reward without sacrificing feasibility, particularly in the constrained regimes where the differences between methods are most pronounced.

\textbf{Reward convergence dynamics.}
Figure~\ref{fig:crowd-reward} highlights the difference between constraint-aware and unconstrained policies. Under the \emph{tight constraint}, TS and Greedy achieve the highest rewards because they do not enforce the expert-cap constraint and therefore serve as unconstrained upper baselines. Among the \emph{constraint-aware methods}, OMD-Approachability initially sacrifices reward to regulate expert usage, but it steadily improves and converges to near-optimal reward by the end of the horizon. TS-BwK exhibits reward trajectories that are competitive with OMD-Approachability under the tight constraint, while CBwK converges more slowly and remains consistently below both methods.

However, the reward curves must be interpreted together with the feasibility results in Figure~\ref{fig:crowd-sweep}. Although TS-BwK achieves comparable reward under the tight constraint, it satisfies the constraint in substantially fewer runs, especially in the regime-switching environment, whereas OMD-Approachability maintains $100\%$ feasibility. Thus, OMD-Approachability is the only constraint-aware policy that combines near-optimal reward with consistently reliable constraint satisfaction.

The advantage of OMD-Approachability is particularly pronounced in the \emph{regime-switching environment}. Under both tight and loose constraints, it achieves higher rolling reward than the other constraint-aware methods throughout most of the horizon while maintaining full feasibility. CBwK exhibits a persistent reward gap, and TS-BwK is more sensitive to non-stationarity, leading to lower reward and reduced feasibility under the tight constraint. Under the \emph{loose constraint}, all methods approach similar reward levels, but OMD-Approachability remains the strongest performing constraint-aware policy.

\begin{figure}[h]
\centering
\includegraphics[width=\linewidth]{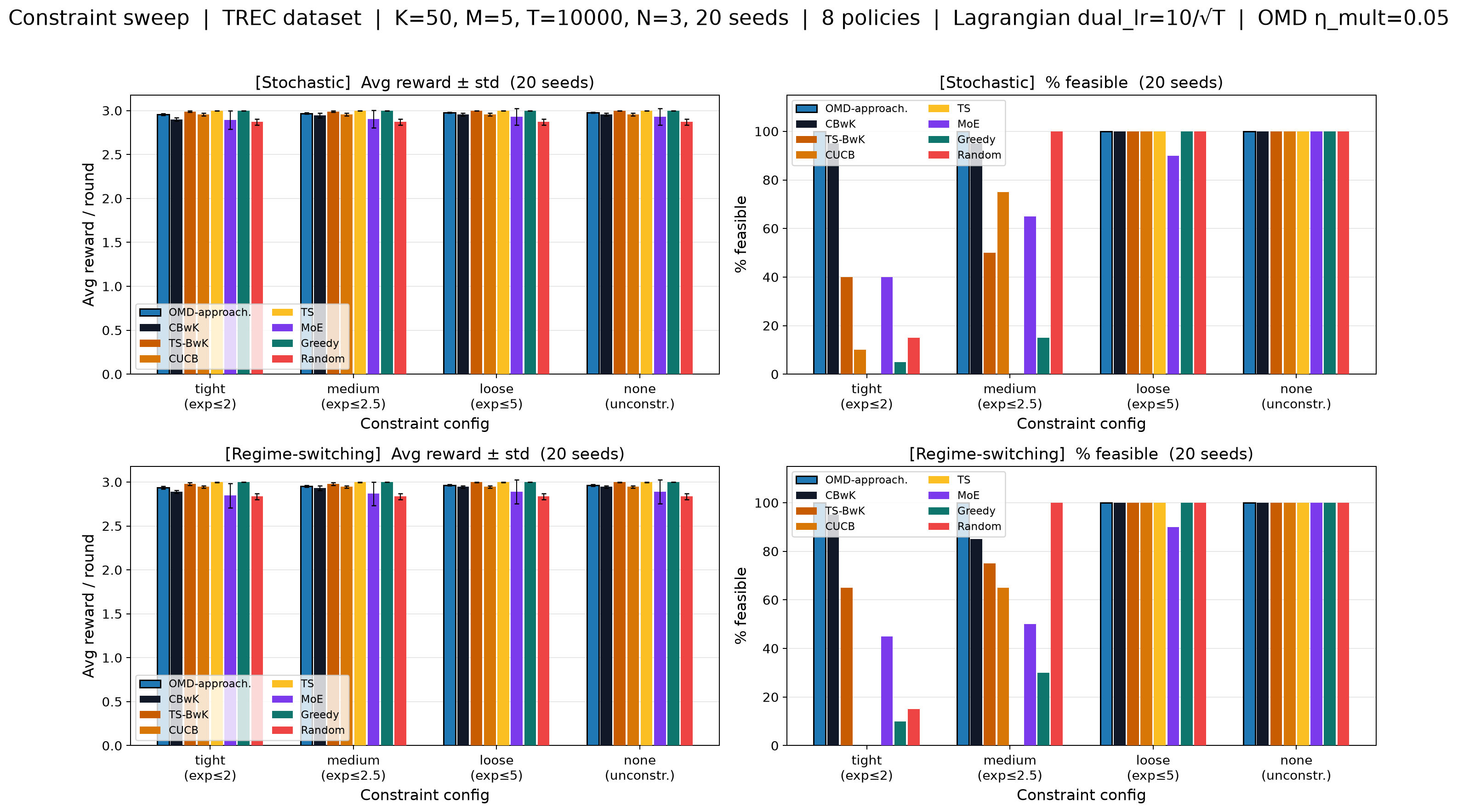}
\caption{Constraint tightness sweep ($K=50$, $M=5$, $N=3$, $T=10{,}000$, $20$ seeds). Each row is an environment; columns show mean $\pm$ std of time-averaged reward (left) and \% of seeds achieving feasibility (right). OMD-Approachability (blue) is the only policy that achieves $100\%$ feasibility under the tight constraint in both environments.}
\label{fig:crowd-sweep}
\end{figure}

\begin{figure}[h]
\centering
\includegraphics[width=\linewidth]{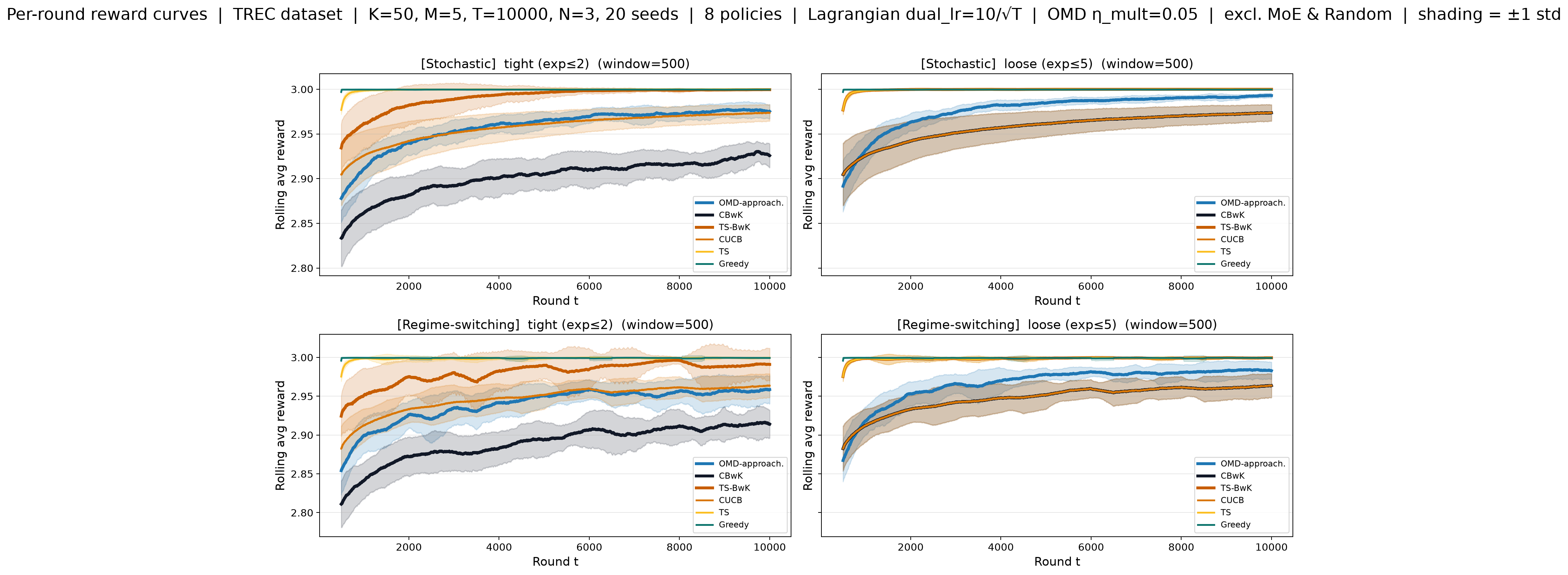}
\caption{Per-round rolling-average reward (window $5{,}000$) for tight (left) and loose (right) constraint configurations, across $20$ seeds (MoE and Random excluded due to high variance). Each row is an environment. Under tight constraint, OMD-Approachability (blue) starts lower than constraint-unaware baselines but converges to near-optimal reward while remaining feasible throughout.}
\label{fig:crowd-reward}
\end{figure}

\section*{}
\bibliographystyle{abbrv}
\bibliography{ref}

\appendix

\section*{Appendix Contents}
\begin{itemize}
  \item Appendix~\ref{app:auxiliary}: Auxiliary Lemmas \dotfill \pageref{app:auxiliary}
  \item Appendix~\ref{app:thm1}: Proof of Theorem~\ref{thm:reg1} \dotfill \pageref{app:thm1}
  \item Appendix~\ref{app:thm2}: Proof of Theorem~\ref{thm:reg2} \dotfill \pageref{app:thm2}
  \item Appendix~\ref{app:contextual}: Contextual Algorithm and Proof of Corollary~\ref{cor:contextual-reg1} \dotfill \pageref{app:contextual}
\end{itemize}

\section{Auxiliary Lemmas}
\label{app:auxiliary}

We collect here the supporting lemmas used across the proofs. 
Let $w_t$ be defined in Algorithm~\ref{alg:approachability-OMD}.
We also define the feasible regions used in the analysis:
\[
\mathcal{F}_{\circ}:=\{q\in\Delta_K: q_i\ge\gamma;\; \textstyle\sum_i q_i c_i \in S\}
\]
\[
\mathcal{F}_{\tau}:=\{q\in\Delta_K: q_i\ge\gamma;\; w_t^\top\textstyle\sum_i q_i c_i -h_S(w_t)\le 0,\;\forall t\in [\tau]\},
\]
and their per-context analogues $\mathcal{F}_{\circ}^{(x)}$, $\mathcal{F}_{\tau}^{(x)}$ with ``$\forall t\in[\tau]$'' replaced by ``$\forall t\in[\tau]: x_t=x$''.
Note that $\mathcal{F}_{\tau}$ and $\mathcal{F}_{\tau}^{(x)}$ are random as $w_t$ are randomly generated by the algorithm. 

For the measurability argument, define the filtration, i.e., the history before sampling $A_t$
\[
\mathcal H_{t-1}
=
\sigma\!\left(
w_1,\ldots,w_t,\;
q_1,\ldots,q_t,\;
A_1,r_1(A_1),\ldots,A_{t-1},r_{t-1}(A_{t-1})
\right).
\]
By construction of Algorithm~\ref{alg:approachability-OMD}, the variables $q_t$ and $w_t$ are
$\mathcal H_{t-1}$-measurable, while $g_t$ is $\mathcal H_t$-measurable.

\begin{lemma}[Three-point identity]\label{lem:three-point}
For any $q, q_t, q_{t+1} \in \mathbb{R}^K$,
\[
\langle q_{t+1} - q_t,\, q - q_{t+1} \rangle
= \tfrac{1}{2}\|q - q_t\|^2 - \tfrac{1}{2}\|q - q_{t+1}\|^2 - \tfrac{1}{2}\|q_{t+1} - q_t\|^2.
\]
\end{lemma}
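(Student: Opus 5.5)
The three-point identity is pure algebra in the Euclidean inner product, so the plan is to expand everything in terms of the inner products of the three vectors and check that both sides agree. It is cleanest to introduce the shorthand $a := q - q_{t+1}$ and $b := q_{t+1} - q_t$. Then $q - q_t = a + b$, and the left-hand side is simply $\langle b, a\rangle$.

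Next, expand the right-hand side using bilinearity and symmetry of the inner product:
\[
\tfrac12\|a+b\|^2 - \tfrac12\|a\|^2 - \tfrac12\|b\|^2
= \tfrac12\bigl(\|a\|^2 + 2\langle a,b\rangle + \|b\|^2\bigr) - \tfrac12\|a\|^2 - \tfrac12\|b\|^2
= \langle a, b\rangle .
\]
This equals the left-hand side, so the identity holds.

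No step is a genuine obstacle; the only care needed is in tracking signs, so that $q - q_t$ is written exactly as $a + b$ (not $a - b$). Note also that the argument uses only that $\|\cdot\|$ is induced by an inner product. It therefore holds for arbitrary vectors in $\mathbb{R}^K$, with no use of the simplex or floor constraints. This generality is what allows it to be applied later to the iterates and comparator in the proof of Theorem~\ref{thm:reg1}.
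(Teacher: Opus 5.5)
Your proof is correct: with $a = q - q_{t+1}$ and $b = q_{t+1} - q_t$ you have $q - q_t = a+b$, and expanding $\tfrac12\|a+b\|^2 - \tfrac12\|a\|^2 - \tfrac12\|b\|^2$ gives exactly $\langle a, b\rangle$, which is the left-hand side. The paper states this identity without proof as a standard fact, and your direct expansion is the routine verification it implicitly relies on.
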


\begin{lemma}\label{lem:distance2support} Let $S$ be a convex and compact set, 
$\mathrm{dist}(u, S) = \max_{\|w\| \leq 1} \left\{ w^\top u - h_S(w) \right\}.$
\end{lemma}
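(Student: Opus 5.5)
The plan is to prove the identity by two inequalities, using the projection characterization of the Euclidean distance together with Cauchy--Schwarz. Let $S$ be convex and compact; then $h_S$ is finite everywhere and the Euclidean projection $\Pi_S(u)$ exists and is unique.

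\emph{Upper bound ($\ge$).} Fix $w$ with $\|w\|\le 1$ and set $x^\star := \Pi_S(u)$, so that $\|u-x^\star\|=\mathrm{dist}(u,S)$. Since $x^\star\in S$, we have $h_S(w)\ge w^\top x^\star$, and therefore
\[
w^\top u - h_S(w)\le w^\top(u-x^\star)\le \|w\|\,\|u-x^\star\|\le \mathrm{dist}(u,S).
\]
Taking the supremum over the unit ball gives $\max_{\|w\|\le1}\{w^\top u-h_S(w)\}\le \mathrm{dist}(u,S)$.

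\emph{Attainment ($\le$).} If $u\in S$, choose $w=0$. Then $h_S(0)=0$, so the objective equals $0=\mathrm{dist}(u,S)$.

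If $u\notin S$, take $w^\star := (u-x^\star)/\|u-x^\star\|$, which has unit norm. The variational inequality for projections onto a closed convex set states
\[
\langle u-x^\star,\; x-x^\star\rangle\le 0 \qquad \text{for all } x\in S.
\]
Hence $w^{\star\top}x\le w^{\star\top}x^\star$ for every $x\in S$, which gives $h_S(w^\star)=w^{\star\top}x^\star$. Consequently
\[
w^{\star\top}u-h_S(w^\star)=w^{\star\top}(u-x^\star)=\|u-x^\star\|=\mathrm{dist}(u,S).
\]
So the upper bound is achieved, and the supremum is in fact a maximum.

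\emph{Main step.} The key ingredient is the projection variational inequality. I would justify it in the standard way: for $x\in S$ and $\lambda\in(0,1]$, the point $x^\star+\lambda(x-x^\star)$ lies in $S$ by convexity, so expanding $\|u-x^\star-\lambda(x-x^\star)\|^2\ge\|u-x^\star\|^2$ and letting $\lambda\to0$ yields the inequality. Everything else in the argument is routine.

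The objective $w\mapsto w^\top u-h_S(w)$ is concave (because $h_S$ is convex) and continuous (because $h_S$ is finite and convex), so the maximum over the compact unit ball is well defined in any case.
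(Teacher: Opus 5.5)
Your proof is correct, but it follows a different route from the paper. The paper writes $h_S(w)=\max_{x\in S}w^\top x$, which turns the objective into $\max_{\|w\|\le 1}\min_{x\in S} w^\top(u-x)$. It then swaps the max and min by citing a minimax theorem (bilinear objective, unit ball and $S$ both convex and compact), and finishes with $\max_{\|w\|\le 1}w^\top(u-x)=\|u-x\|$. In that argument the inequality $\max_{\|w\|\le1}\{w^\top u-h_S(w)\}\le \mathrm{dist}(u,S)$ is only weak duality ($\max\min\le\min\max$). All the real work, including the one place where convexity of $S$ matters, is hidden in the strong-duality direction that the minimax theorem supplies. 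You make both directions explicit. Your Cauchy--Schwarz step at the projection $x^\star=\Pi_S(u)$ is exactly the weak-duality half. The projection variational inequality then replaces the minimax theorem for the reverse half, and that is precisely where you use convexity. Your route gains three things. It is self-contained. It exhibits the maximizer explicitly, $w^\star=(u-\Pi_S(u))/\mathrm{dist}(u,S)$ (or $w^\star=0$ when $u\in S$), which shows the maximum is attained without any continuity or compactness argument. And it never uses boundedness of $S$ beyond existence of the projection, so it holds verbatim for any nonempty closed convex $S$ with $h_S$ allowed to take the value $+\infty$. The paper's route is shorter and reads as a one-line duality statement, at the cost of delegating the nontrivial direction to a cited theorem whose hypotheses must be checked.
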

\begin{proof}
The proof follows Lemma~13.5 of \citep{Hazan}.
Using the definition of the support function,
\begin{align*}
\max_{\|w\| \leq 1} \left\{ w^\top u - h_S(w) \right\}
&= \max_{\|w\| \leq 1} \left\{ w^\top u - \max_{\mathbf{x} \in S} w^\top \mathbf{x} \right\} \\
&= \max_{\|w\| \leq 1} \min_{\mathbf{x} \in S} \left\{ w^\top u - w^\top \mathbf{x} \right\}\\
&= \min_{\mathbf{x} \in S} \max_{\|w\| \leq 1} \left\{ w^\top u - w^\top \mathbf{x} \right\}
\quad \text{(minimax theorem)} \\
&= \min_{\mathbf{x} \in S} \| \mathbf{x} - u \| \\
&= \mathrm{dist}(u, S).
\end{align*}
\end{proof}

\section{Proof of Theorem~\ref{thm:reg1}}
\label{app:thm1}


\begin{proof}[Proof of Theorem~\ref{thm:reg1}]
Let $q^\star \in \arg\max_{q\in\mathcal F_{\circ}}\sum_{t=1}^T \Phi^{r_t}(q)$.
By the definition of $\OPT{}$ in~\eqref{equ:opt-define} and Assumption~\ref{ass:non-zero},
\begin{equation}
\sum_{t=1}^T \Phi^{r_t}(q^\star)\ge T\cdot \OPT.
\label{eq:opt-lower}
\end{equation}
Hence
\begin{equation}
T\cdot\OPT{}-\sum_{t=1}^T \Phi^{r_t}(q_t)
\le
\sum_{t=1}^T\bigl(\Phi^{r_t}(q^\star)-\Phi^{r_t}(q_t)\bigr).
\label{equ:relaxation-SGD}
\end{equation}

\textbf{Step 1.}
By Lemma~\ref{lem:distance2support}, for every realization of the algorithm,
$\mathcal F_{\circ}\subseteq \mathcal F_t$, because every $w_t$ generated by
Algorithm~\ref{alg:approachability-OMD} satisfies $\|w_t\|\le 1$.
Therefore $q^\star\in\mathcal F_t$ almost surely for every $t$.

Since $q_{t+1}$ is the solution of the proximal optimization problem~\eqref{equ:online-opt},
the first-order optimality condition gives
\[
\langle g_t-\eta(q_{t+1}-q_t),\,q-q_{t+1}\rangle\le 0,
\qquad \forall q\in\mathcal F_t.
\]
Applying this with $q=q^\star$ and using the three-point identity (Lemma~\ref{lem:three-point}),
\[
\langle g_t,q^\star-q_t\rangle
\le
\tfrac{\eta}{2}
\left(
\|q^\star-q_t\|^2
-
\|q^\star-q_{t+1}\|^2
-
\|q_{t+1}-q_t\|^2
\right)
+
\langle g_t,q_{t+1}-q_t\rangle.
\]

By concavity of $\Phi^{r_t}$,
\[
\Phi^{r_t}(q^\star)-\Phi^{r_t}(q_t)
\le
\langle\nabla\Phi^{r_t}(q_t),\,q^\star-q_t\rangle.
\]
Adding and subtracting $g_t$,
\[
\begin{aligned}
\Phi^{r_t}(q^\star)-\Phi^{r_t}(q_t)
&\le
\langle g_t,q^\star-q_t\rangle
+
\langle\nabla\Phi^{r_t}(q_t)-g_t,\,
q^\star-q_t\rangle
\\
&\le
\tfrac{\eta}{2}
\left(
\|q^\star-q_t\|^2
-
\|q^\star-q_{t+1}\|^2
-
\|q_{t+1}-q_t\|^2
\right)
\\
&\qquad
+
\langle g_t,q_{t+1}-q_t\rangle
+
\langle\nabla\Phi^{r_t}(q_t)-g_t,\,
q^\star-q_t\rangle.
\end{aligned}
\]

Summing over $t=1,\ldots,T$ and telescoping,
\begin{equation}
\begin{aligned}
\sum_{t=1}^T
\bigl(
\Phi^{r_t}(q^\star)-\Phi^{r_t}(q_t)
\bigr)
&\le
\tfrac{\eta}{2}\|q^\star-q_1\|^2
+
\sum_{t=1}^T
\langle g_t,q_{t+1}-q_t\rangle
\\
&\qquad
-
\tfrac{\eta}{2}
\sum_{t=1}^T
\|q_{t+1}-q_t\|^2
\\
&\qquad
+
\sum_{t=1}^T
\langle
\nabla\Phi^{r_t}(q_t)-g_t,\,
q^\star-q_t
\rangle.
\end{aligned}
\label{equ:decomposition-SGD}
\end{equation}

\textbf{Step 2.}
For the last term of \eqref{equ:decomposition-SGD}, by Lemma~6.6 of~\cite{pasteris2023sum},
\[
\mathbb E[g_t\mid\mathcal H_{t-1}]
=
\nabla\Phi^{r_t}(q_t).
\]
Since both $q^\star$ and $q_t$ are $\mathcal H_{t-1}$-measurable,
\[
\mathbb E\!\left[
\langle
\nabla\Phi^{r_t}(q_t)-g_t,\,
q^\star-q_t
\rangle
\mid
\mathcal H_{t-1}
\right]
=0.
\]
Taking expectations and using the tower property,
\[
\mathbb E\!\left[
\sum_{t=1}^T
\langle
\nabla\Phi^{r_t}(q_t)-g_t,\,
q^\star-q_t
\rangle
\right]
=0.
\]

\textbf{Step 3.}
For the first inner-product term of \eqref{equ:decomposition-SGD}, Young's inequality gives
\[
\langle g_t,q_{t+1}-q_t\rangle
\le
\tfrac{2}{\eta}\|g_t\|^2
+
\tfrac{\eta}{2}\|q_{t+1}-q_t\|^2.
\]
Summing over $i$, the second term on the right-hand side exactly cancels the third term in \eqref{equ:decomposition-SGD}.

Again conditioning on $\mathcal H_{t-1}$, Lemma~6.14 of~\cite{pasteris2023sum} gives
\[
\mathbb E[g_{t,i}^2\mid\mathcal H_{t-1}]
\le
\tfrac{M}{q_{t,i}}
+
M(M-1).
\]
Since $q_{t,i}\ge\gamma$ almost surely,
\[
\mathbb E[g_{t,i}^2\mid\mathcal H_{t-1}]
\le
\tfrac{M}{\gamma}
+
M(M-1).
\]
Taking expectations and summing over $i$,
\[
\mathbb E[\|g_t\|^2]
=
\sum_{i=1}^K
\mathbb E[g_{t,i}^2]
\le
KM\left(\frac1\gamma+M-1\right).
\]
Therefore
\[
\sum_{t=1}^T
\mathbb E[\|g_t\|^2]
\le
TKM\left(\frac1\gamma+M-1\right).
\]

\textbf{Step 4.}
Taking expectations on both sides of \eqref{equ:relaxation-SGD}, and combining \eqref{equ:decomposition-SGD} with the bounds derived above, together with the Euclidean diameter bound of the simplex,
$\|q^\star-q_1\|^2\le 2$, we obtain
\[
\begin{aligned}
\mathbb E\!\left[
T\cdot\OPT{}
-
\sum_{t=1}^T
\Phi^{r_t}(q_t)
\right]
&\le
\eta
+
\tfrac{2}{\eta}
\sum_{t=1}^T
\mathbb E[\|g_t\|^2]
\\
&\le
\eta
+
\tfrac{2}{\eta}
TKM
\left(
\frac1\gamma+M-1
\right).
\end{aligned}
\]

\textbf{Step 5.}
Optimizing the upper bound over $\eta$ gives
\[
\eta^\star
=
\sqrt{
2TKM
\left(
\frac1\gamma+M-1
\right)
}.
\]
Substituting this value yields
\[
\mathbb E\!\left[
T\cdot\OPT{}
-
\sum_{t=1}^T
\Phi^{r_t}(q_t)
\right]
\le
2
\sqrt{
2TKM
\left(
\frac1\gamma+M-1
\right)
},
\]
which proves the theorem.
\end{proof}

\section{Proof of Theorem~\ref{thm:reg2}}
\label{app:thm2}
The constraint regret could be decomposed as follows:
\begin{equation}
\begin{aligned}
  \mathcal{R}_2(T) 
  &=\operatorname{dist}\!\left(
    \tfrac{1}{T}\sum_{t=1}^T \ell_t,\; S^{\mathrm{ori}}
  \right)=
  \operatorname{dist}\!\left(
    \tfrac{1}{T}\sum_{t=1}^T \sum_{i=1}^K \mathbf{1}[i \in A_t]\, c_i,\; S^{\mathrm{ori}}
  \right)\\
  &\le \operatorname{dist}\!\left(
    \tfrac{1}{T}\sum_{t=1}^T \sum_{i=1}^K \mathbf{1}[i \in A_t]\, c_i,\; \tfrac{1}{T}\sum_{t=1}^T \E[\ell_t]
  \right) + \operatorname{dist}\!\left(
    \tfrac{1}{T}\sum_{t=1}^T \E[\ell_t],\; S^{\mathrm{ori}}
  \right)\\
  &\le\underbrace{
\dist\!\left(
\tfrac{1}{T}\sum_{t=1}^T \sum_{i=1}^K \mathbf{1}[i \in A_t]\, c_i,\;
\tfrac{1}{T}\sum_{t=1}^T \E[\ell_t]
\right)
}_{\text{sampling error}}
+
M\cdot \underbrace{
\dist\!\left(
\tfrac{1}{T}\sum_{t=1}^T\sum_{i=1}^K q_{t,i}\,c_i,\;
S
\right)
}_{\text{approachability error}},
\end{aligned}
\label{equ:R2-decompose}
\end{equation}
where the first inequality uses triangle inequality, the second inequality uses Lemma~\ref{lem:dist-transfer}.
Then, we deal with each term separately.

\begin{lemma}[Approachability error]\label{lem:approach-error}
Let $q_t$ for $t\in[T]$ be the output of Algorithm~\ref{alg:approachability-OMD}, with input OCO algorithm $\mathcal{A}$. Then the average approachability error is
\[
\mathrm{dist}\!\left(\tfrac{1}{T}\sum_{t=1}^T\sum_{i=1}^K q_{t,i}\,c_i,\,S\right)\;\le\;\tfrac{\mathrm{Regret}_T(\mathcal{A})}{T}.
\]
\end{lemma}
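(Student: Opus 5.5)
The plan is to follow the standard reduction from Blackwell approachability to online convex optimization (as in Chapter 13 of \citep{Hazan}). Write $x_t := \sum_{i=1}^K q_{t,i} c_i$ for the surrogate cost played at round $t$, and let $f_t(w) := w^\top x_t - h_S(w)$ be the payoff functions used by the algorithm. Each $f_t$ is concave on the unit ball $\mathbb{B}$: the first term is linear, and $h_S$ is convex as a pointwise maximum of linear functions. Equivalently, $-f_t$ is convex. Moreover $f_t$ is Lipschitz, because $\|x_t\|\le\sqrt d\,\bar c$ and $S\subseteq[0,\bar c]^d$ is compact. Hence $\mathcal{A}$, run on $\mathbb{B}$ to maximize $\sum_t f_t$ (or to minimize $\sum_t(-f_t)$), satisfies
\[
\max_{\|w\|\le 1}\sum_{t=1}^T f_t(w) - \sum_{t=1}^T f_t(w_t) \le \mathrm{Regret}_T(\mathcal{A}).
\]
The timing matches the OCO protocol. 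In Algorithm~\ref{alg:approachability-OMD}, $w_t$ is computed from $f_1,\dots,f_{t-1}$ only, and $f_t$ depends on $q_t$, which is chosen after $w_t$. So the $w_t$ are legitimate OCO plays against the sequence $f_1,\dots,f_T$. The last function $f_T$ is never fed to $\mathcal{A}$, but it is well defined and can be included in the regret.

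\textbf{Step 1: express the distance as a maximum of averaged payoffs.} Apply Lemma~\ref{lem:distance2support} to $u=\bar x := \tfrac1T\sum_t x_t$; this is valid since $S$ is convex and compact. The map $u\mapsto w^\top u - h_S(w)$ is affine in $u$ with the constant $h_S(w)$ independent of $t$. Therefore
\[
\operatorname{dist}(\bar x, S)=\max_{\|w\|\le1}\Bigl\{ w^\top \bar x - h_S(w)\Bigr\} = \max_{\|w\|\le1}\tfrac1T\sum_{t=1}^T f_t(w).
\]

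\textbf{Step 2: show that each round's payoff is nonpositive.} By the constraint in the QP~\eqref{equ:online-opt}, the chosen $q_t$ satisfies $w_t^\top x_t - h_S(w_t)\le 0$, that is, $f_t(w_t)\le 0$ for every $t$. This relies on the QP being feasible at every round, which is exactly Remark~\ref{rem:feasible}, a consequence of Assumption~\ref{ass:non-zero} and $\|w_t\|\le 1$. Combining this with the regret bound and Step 1 gives
\[
T\operatorname{dist}(\bar x,S)\le \sum_t f_t(w_t)+\mathrm{Regret}_T(\mathcal{A})\le \mathrm{Regret}_T(\mathcal{A}).
\]
Dividing by $T$ proves the lemma. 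The bound holds pathwise, for every realization, so no expectation is needed.

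The main point requiring care is not a calculation but bookkeeping. First, the index shift between $f_{t-1}$ in the algorithm and the $f_t$ used in the regret must be handled correctly. Second, $\mathrm{Regret}_T(\mathcal{A})$ must be interpreted against the adaptively chosen sequence $f_t$: the regret guarantee must hold for adaptive adversaries, since $f_t$ depends on $w_t$ through $q_t$. OGD's guarantee holds for arbitrary adaptive sequences, so I would simply state this assumption on $\mathcal{A}$ explicitly. Third, feasibility of the QP, via Remark~\ref{rem:feasible}, is what ensures $f_t(w_t)\le0$.
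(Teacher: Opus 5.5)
Your proposal is correct and follows essentially the same route as the paper's proof: you rewrite $\operatorname{dist}(\bar x,S)$ via Lemma~\ref{lem:distance2support} as $\max_{\|w\|\le 1}\tfrac1T\sum_t f_t(w)$, invoke the OCO regret of $\mathcal{A}$, and use the QP constraint $f_t(w_t)\le 0$. Your extra bookkeeping (the index shift, regret against an adaptive adversary, and QP feasibility via Remark~\ref{rem:feasible}) spells out points the paper's proof leaves implicit.
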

\begin{proof}
The proof follows Theorem~13.7 of \citep{Hazan}.
 {Set $u_{t}:=\sum_{i=1}^K q_{t,i} c_i$. 
Notice that $q_t$ is the solution of the optimization problem \eqref{equ:online-opt}, such that it satisfies its constraint, i.e., for $w_t$ as defined in the algorithm, we have for any $t$,}
\begin{equation}
f_{t}(w_t)= w_t^\top u_{t} - h_S(w_t) \leq 0.
\label{equ:nonpositive-f}
\end{equation}
Set $\bar{u}_T = \tfrac{1}{T} \sum_{t=1}^T u_{t}$.
Using Lemma~\ref{lem:distance2support},
\begin{align*}
\mathrm{dist}(\bar{u}_T, S)
&= \max_{\|w\| \leq 1} \left\{ w^\top \bar{u}_T - h_S(w) \right\} \\
&= \max_{w^* \in \mathbb{B}} \tfrac{1}{T} \sum_{t=1}^{T} f_t(w^*)
\quad \text{(definition of } f_t\text{)} \\
&\leq \tfrac{1}{T} \sum_{t=1}^{T} f_t(w_t) + \tfrac{\mathrm{Regret}_T(\mathcal{A})}{T}
\quad \text{(OCO guarantee of } \mathcal{A}\text{)} \\
&\leq \tfrac{\mathrm{Regret}_T(\mathcal{A})}{T}
\quad \text{(Equation~\eqref{equ:nonpositive-f})}.
\end{align*}
\end{proof}

\begin{lemma}[Sampling error]\label{lem:sampling-error}
With probability at least $1-\delta$, for each coordinate $j\in[d]$,
\[
\left|\tfrac{1}{T}\sum_{t=1}^T\sum_{i=1}^K
(\mathbf{1}[i\in A_t]-\mathbb{E}[\mathbf{1}[i\in A_t]])c_i^{(j)}\right|\le   {M}\bar{c}\sqrt{\tfrac{2\log(2d/\delta)}{T}}.
\]
Therefore, for an absolute constant $L>0$,
\[
\E\;\dist\!\left(
\tfrac{1}{T}\sum_{t=1}^T \sum_{i=1}^K \mathbf{1}[i \in A_t]\, c_i,\;
\tfrac{1}{T}\sum_{t=1}^T \E [\ell_t]
\right)
\le
  {M}\bar{c}L\sqrt{\tfrac{d\log(2d)}{T}}.
\]
\end{lemma}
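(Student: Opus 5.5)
The plan is a martingale concentration argument applied coordinate by coordinate, followed by a union bound and integration of the resulting tail. Fix $j\in[d]$ and set
\[
X_t^{(j)}:=\sum_{i=1}^K\bigl(\mathbf{1}[i\in A_t]-\mathbb{P}(i\in A_t\mid\mathcal H_{t-1})\bigr)c_i^{(j)} .
\]
Two points about this definition need care.

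\begin{itemize}
\item The expectation $\mathbb{E}[\mathbf{1}[i\in A_t]]$ in the statement should be read as the conditional one, $1-(1-q_{t,i})^M$ given $\mathcal H_{t-1}$. This matches how $\mathbb{E}[\ell_t]$ is written in \eqref{eq:exact-cost}, since $q_t$ is itself random.
\item Because $q_t$ is $\mathcal H_{t-1}$-measurable and $A_t$ is drawn afresh from $q_t$, we have $\mathbb{E}[X_t^{(j)}\mid\mathcal H_{t-1}]=0$. So $(X_t^{(j)})_t$ is a martingale difference sequence with respect to the filtration $\mathcal H_t$ defined in the appendix.
\end{itemize}

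Next, bound the increments. Since $|A_t|\le M$ and $c_i^{(j)}\in[0,\bar c]$, the realized term $\sum_i\mathbf{1}[i\in A_t]c_i^{(j)}$ lies in $[0,M\bar c]$. Its conditional mean lies in the same interval. Hence $X_t^{(j)}$ lies in an interval of length $M\bar c$ whose endpoints are $\mathcal H_{t-1}$-measurable, and in any case $|X_t^{(j)}|\le M\bar c$.

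Apply Azuma--Hoeffding, using the conditional Hoeffding lemma for a predictable range of length $M\bar c$:
\[
\mathbb{P}\Bigl(\bigl|\tfrac1T\textstyle\sum_t X_t^{(j)}\bigr|>\epsilon\Bigr)\le 2\exp\bigl(-2T\epsilon^2/(M\bar c)^2\bigr).
\]
Using the cruder bound $|X_t^{(j)}|\le M\bar c$ instead gives $2\exp(-T\epsilon^2/(2M^2\bar c^2))$. Setting this equal to $\delta/d$ yields $\epsilon=M\bar c\sqrt{2\log(2d/\delta)/T}$. A union bound over the $d$ coordinates then gives the first claim.

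For the expectation bound, write $Z:=\max_j\bigl|\tfrac1T\sum_tX_t^{(j)}\bigr|$ and note the distance in question is the Euclidean norm of the averaged deviation vector, which is at most $\sqrt d\,Z$. Integrate the tail:
\[
\mathbb{E}Z=\int_0^\infty\mathbb{P}(Z>\epsilon)\,d\epsilon \le \epsilon_0+\int_{\epsilon_0}^\infty 2d\,e^{-T\epsilon^2/(2M^2\bar c^2)}\,d\epsilon .
\]
Choose $\epsilon_0=M\bar c\sqrt{2\log(2d)/T}$. The Gaussian tail integral is then at most $O(M\bar c/\sqrt{T\log(2d)})$, which is dominated by the first term. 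So $\mathbb{E}Z\le C M\bar c\sqrt{\log(2d)/T}$, and multiplying by $\sqrt d$ gives the stated $M\bar cL\sqrt{d\log(2d)/T}$.

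The main obstacle is the martingale step: $q_t$ depends on past samples, so the terms are not independent and Hoeffding alone does not apply. The conditional-mean interpretation together with Azuma's inequality is exactly what resolves this. The remaining steps are routine constant bookkeeping; the $\sqrt d$ factor is lossy but matches the statement. Alternatively, $\ell_\infty$ to $\ell_2$ could be handled directly via $\|v\|_2\le\sqrt d\|v\|_\infty$, which is the route described above.
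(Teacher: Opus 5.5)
Your proposal is correct and follows essentially the same route as the paper: centering with respect to $\mathcal H_{t-1}$ (the paper defines $Y_t$ this way), Azuma--Hoeffding with the crude increment bound $|Y_t^{(j)}|\le M\bar c$, a union bound over the $d$ coordinates, and integration of the tail using a Gaussian tail estimate. The only difference is cosmetic: you integrate the tail of the $\ell_\infty$ deviation and multiply by $\sqrt d$ afterwards, whereas the paper passes to the $\ell_2$ norm on the high-probability event first and integrates the tail of $\|\bar Y\|_2$ directly; the two are equivalent under the rescaling $u=\sqrt d\,\epsilon$.
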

\begin{proof}
\textbf{Step 1: High-probability event.}
Define $Y_{t} := \sum_{i=1}^K(\mathbf{1}[i\in A_t] - \mathbb{E}[\mathbf{1}[i\in A_t]\mid\mathcal H_{t-1}])c_i$.
For each coordinate $j\in[d]$, $\mathbb{E}[Y_{t}^{(j)}\mid\mathcal H_{t-1}]=0$, so $(Y_{t}^{(j)},\mathcal{H}_t)$ is a martingale difference sequence.
 {Moreover, for each coordinate $j\in[d]$, $\mathbb{E}[Y_{t}^{(j)}\mid\mathcal H_{t-1}]=0$ and $Y_{t}^{(j)}\in[-M\bar{c},M\bar{c}]$ as $c_i^{(j)}\in[0,\bar{c}]$ and $|A_t|\le M$.} 
Let $\bar{Y} := \tfrac{1}{T}\sum_{t=1}^T Y_t \in \mathbb{R}^d$.
By Azuma–Hoeffding,
\[
\Pr\left(\left|\bar{Y}^{(j)}\right|\geq \epsilon\right)
\leq 2\exp\left(-\tfrac{T\epsilon^2}{2  {M^2}\bar{c}^2}\right).
\]
Applying a union bound over $j\in[d]$ and setting $\epsilon =   {M}\bar{c}\sqrt{\tfrac{2\log(2d/\delta)}{T}}$, with probability at least $1-\delta$:
\[
\left|\bar{Y}^{(j)}\right|
\leq   {M}\bar{c}\sqrt{\tfrac{2\log(2d/\delta)}{T}}
\quad \text{for all } j\in[d].
\]
Call this event $\mathcal{E}(\delta)$.

\textbf{Step 2: $\ell_2$ bound.}
On event $\mathcal{E}(\delta)$:
\[
\|\bar{Y}\|_2 = \sqrt{\sum_{j=1}^d (\bar{Y}^{(j)})^2} \leq \sqrt{d \cdot \tfrac{2  {M^2}\bar{c}^2\log(2d/\delta)}{T}} = \sqrt{\tfrac{2d  {M^2}\bar{c}^2\log(2d/\delta)}{T}}.
\]

\textbf{Step 3: Integration.}
\[
\mathbb{E}\|\bar{Y}\|_2 = \int_0^\infty \mathbb{P}\!\left(\|\bar{Y}\|_2 > u\right)du.
\]
Choose $\delta = \delta(u)$ to make the threshold equal $u$:
\[
\sqrt{\tfrac{2d  {M^2}\bar{c}^2\log(2d/\delta)}{T}} = u
\implies
\delta(u) = 2d\exp\!\left(-\tfrac{Tu^2}{2d  {M^2}\bar{c}^2}\right).
\]
On $\mathcal{E}(\delta)$, $\|\bar{Y}\|_2 \leq u$, so $\mathbb{P}(\|\bar{Y}\|_2 > u) \leq \delta(u)$, giving:
\[
\mathbb{E}\|\bar{Y}\|_2 \leq \int_0^\infty \min\!\left(1,\ 2d\exp\!\left(-\tfrac{Tu^2}{2d  {M^2}\bar{c}^2}\right)\right)du.
\]
Split at $u_0 = \sqrt{\tfrac{2d  {M^2}\bar{c}^2\log(2d)}{T}}$ (where $2d\exp\bigl(-Tu_0^2/2d  {M^2}\bar{c}^2\bigr) = 1$):
\[
\mathbb{E}\|\bar{Y}\|_2
\;\leq\;
\underbrace{\int_0^{u_0} 1\, du}_{=\;u_0}
\;+\;
\int_{u_0}^\infty 2d\exp\!\left(-\tfrac{Tu^2}{2d  {M^2}\bar{c}^2}\right)du.
\]
Using the Gaussian tail bound $\int_\nu^\infty e^{-t^2/a^2}dt \le \tfrac{a^2}{2\nu}e^{-\nu^2/a^2}$ with $a^2 = 2d  {M^2}\bar{c}^2/T$:
\[
\int_{u_0}^\infty 2d\, e^{-Tu^2/(2d  {M^2}\bar{c}^2)}\,du
\le \tfrac{2d^2  {M^2}\bar{c}^2}{Tu_0} \cdot e^{-Tu_0^2/(2d  {M^2}\bar{c}^2)}
= \tfrac{2d^2  {M^2}\bar{c}^2}{Tu_0} \cdot \tfrac{1}{2d}
= \sqrt{\tfrac{d  {M^2}\bar{c}^2}{2T\log(2d)}}.
\]
Combining:
\[
\mathbb{E}\|\bar{Y}\|_2
\;\leq\;
\sqrt{\tfrac{2d  {M^2}\bar{c}^2\log(2d)}{T}} + \sqrt{\tfrac{d  {M^2}\bar{c}^2}{2T\log(2d)}}
\;\leq\;
  {M}\bar{c}L\sqrt{\tfrac{d\log(2d)}{T}}
\]
for an absolute constant $L>0$.
\end{proof}

\begin{proof}[Proof of Theorem~\ref{thm:reg2}]
Taking expectations on \eqref{equ:R2-decompose} and applying Lemma~\ref{lem:approach-error} \&~\ref{lem:sampling-error} gives the result.
\end{proof}

\section{Contextual Algorithm and Proof of Corollary~\ref{cor:contextual-reg1}}
\label{app:contextual}

\begin{algorithm}[H]\label{alg:contextual-OMD}
\caption{Contextual OMD with Approachability Constraints}
\begin{enumerate}
\item \textbf{Input:} rescaled set $S$, OCO algorithm $\mathcal{A}$, context set $\mathcal{X}$, step-size constant $a=2\sqrt{KM(\tfrac1\gamma+M-1)}$
\item Set $\mathbb{B}\subset\mathbb{R}^d$ to be the unit Euclidean ball, as decision set for $\mathcal{A}$
\item For each $x\in\mathcal{X}$: set $q^{(x)}_i:=1/K$ for $i\in[K]$, $g^{(x)}:=\mathbf{0}$, and visit count $n^{(x)}:=0$
\item For $t = 1, \ldots, T$:
\begin{enumerate}
    \item Observe the context $x \leftarrow x_t$ and increment $n^{(x)} \leftarrow n^{(x)}+1$
    \item Set $f_{t-1}(w) = w^\top u_{t-1} - h_S(w)$, where $u_{t-1} = \sum_{i=1}^K q_{t-1,i}^{(x)}\; c_i$
    \item Query $\mathcal{A}$: $w_t \leftarrow \mathcal{A}(f_1, \ldots, f_{t-1})$
    \item Set the per-context anytime proximal parameter $\eta\leftarrow a\sqrt{n^{(x)}}$
    \item Update only the current context's iterate, using its own stored gradient $g^{(x)}$:
    \begin{equation}
    \label{equ:online-opt-ctx}
    \begin{split}
        q^{(x)} \leftarrow \arg\max_{q\in\Delta_K:\; q\ge\gamma} & \quad q^\top g^{(x)} -\tfrac{\eta}{2}\bigl\|q-q^{(x)}\bigr\|^2 \\
        \text{s.t. }& w_t^\top  \sum_{i=1}^K q_i\; c_i - h_S(w_t) \leq 0
    \end{split}
    \end{equation}
    \item Draw $a_{t,1},\dots,a_{t,M}$ independently from $q^{(x)}$; set $A_t \leftarrow \{a_{t,j} : j \in [M]\}$
    \item Observe $r_t(A_t)$
    \item Refresh the current context's gradient: for each $i \in [K]$,
    $g^{(x)}_i \leftarrow \tfrac{r_t(A_t)}{q_{i}^{(x)}} \sum_{j=1}^M \ind{a_{t,j}=i}$
\end{enumerate}
\end{enumerate}
\end{algorithm}

\begin{proof}[Proof of Corollary~\ref{cor:contextual-reg1}]
Write $G^2:=KM\left(\frac1\gamma+M-1\right)$, so that $a=2G$. 
Fix a context $x\in\mathcal{X}$. Let $T^{(x)}$ be the number of rounds with
$x_t=x$, and index these visits by $s=1,\ldots,T^{(x)}$. The iterates
$\{q_s^{(x)}\}$ evolve exactly as Algorithm~\ref{alg:approachability-OMD}
restricted to this subsequence, with proximal parameters $\eta_s^{(x)}=a\sqrt{s}$.

Applying the one-step inequality established in Step~1 of the proof of
Theorem~\ref{thm:reg1} (with the filtration argument and conditional
expectation taken with respect to the history of the context-$c$ process),
we obtain after summing over $s=1,\ldots,T^{(x)}$, for any per-context comparator $q^{(x)}_{\star}\in\mathcal{F}_{\circ}^{(x)}$
\[
\mathbb E\!\left[
\sum_{s=1}^{T^{(x)}}
\bigl(
\Phi^{r_s}(q^{(x)}_{\star})-\Phi^{r_s}(q_s^{(x)})
\bigr)
\right]
\le
\sum_{s=1}^{T^{(x)}}
\tfrac{\eta_s^{(x)}}2
\Bigl(
\|q^{(x)}_{\star}-q_s^{(x)}\|^2
-
\|q^{(x)}_{\star}-q_{s+1}^{(x)}\|^2
\Bigr)
+
\sum_{s=1}^{T^{(x)}}
\tfrac{2G^2}{\eta_s^{(x)}},
\]
where the two inner-product terms disappear exactly as in the proof of Theorem~\ref{thm:reg1}.

For the first sum, Abel summation with the simplex diameter $\|q^{(x)}_{\star}-q^{(x)}_s\|^2\le 2$ gives a telescoping bound,
\[
\sum_{s=1}^{T^{(x)}}\tfrac{\eta^{(x)}_s}{2}\Bigl(\bigl\|q^{(x)}_{\star}-q^{(x)}_s\bigr\|^2
        -\bigl\|q^{(x)}_{\star}-q^{(x)}_{s+1}\bigr\|^2\Bigr)
  \;\le\;
  \eta^{(x)}_1 + \sum_{s=2}^{T^{(x)}}\bigl(\eta^{(x)}_s-\eta^{(x)}_{s-1}\bigr)
  \;=\; \eta^{(x)}_{T^{(x)}} \;=\; a\sqrt{T^{(x)}}.
\]
For the second sum,
\[
\sum_{s=1}^{T^{(x)}}\frac1{\eta_s^{(x)}}
=
\frac1a\sum_{s=1}^{T^{(x)}}s^{-1/2}
\le
\tfrac{2}{a}\sqrt{T^{(x)}}.
\]
Hence the expected regret for context $c$ satisfies
\[
\mathbb E\!\left[
\sum_{s=1}^{T^{(x)}}
\bigl(
\Phi^{r_s}(q^{(x)}_{\star})-\Phi^{r_s}(q_s^{(x)})
\bigr)
\right]
\le
\left(a+\tfrac{4G^2}{a}\right)\sqrt{T^{(x)}}
=
4G\sqrt{T^{(x)}},
\]
where the last equality uses $a=2G$.

Summing over all contexts and applying Cauchy--Schwarz (i.e., $\sum_{x\in\mathcal{X}}\sqrt{T^{(x)}}
\le
\sqrt{|\mathcal{X}|\,T},$), yields
\[
\begin{aligned}
T\,\mathbb E\!\left[R_1^{\mathrm{ctx}}(T)\right]
&\le
4G
\sum_{x\in\mathcal{X}}\sqrt{T^{(x)}}
\\
&\le
4G\sqrt{|\mathcal{X}|\,T}
\\
&=
4
\sqrt{
|\mathcal{X}|\,TKM
\left(
\frac1\gamma+M-1
\right)
},
\end{aligned}
\]
which proves the result.
\end{proof}

\end{document}